\documentclass[11pt,twoside]{article} 
\usepackage{geometry}
\geometry{letterpaper, margin=1in}

\newcommand*\samethanks[1][\value{footnote}]{\footnotemark[#1]}
\author{Bhagyashree Puranik\thanks{Equal contribution. This work was supported by the National Science Foundation under Grants 1909320, 2224263, 2342253 and 2236483. This manuscript has been accepted for publication in the \textit{IEEE Journal on Selected Areas in Information Theory} special issue on information-theoretic methods for reliable and trustworthy ML ~\href{https://doi.org/10.1109/JSAIT.2024.3416078}{https://doi.org/10.1109/JSAIT.2024.3416078}.}\hspace{0.1cm},  Ozgur Guldogan\samethanks\hspace{0.1cm}, Upamanyu Madhow, Ramtin Pedarsani
\vspace{8pt} 
\\
University of California, Santa Barbara, USA 
}

\title{Long-Term Fairness in Sequential Multi-Agent Selection with Positive Reinforcement}

\usepackage[round]{natbib}

\usepackage{url}

\usepackage{amsmath,amssymb,amsthm,amsfonts,amscd,mathrsfs,mathtools}
\usepackage{algorithmic}
\usepackage{graphicx}
\usepackage{textcomp}

\def\BibTeX{{\rm B\kern-.05em{\sc i\kern-.025em b}\kern-.08em
    T\kern-.1667em\lower.7ex\hbox{E}\kern-.125emX}}

\usepackage{graphicx}
\usepackage{amsmath}
\usepackage{amsthm}
\usepackage{amsfonts}
\usepackage[labelformat=simple]{subcaption}

\usepackage{wrapfig}
\usepackage{xcolor}
\usepackage{hyperref}

\allowdisplaybreaks

\definecolor{ruddy}{rgb}{1.0, 0.0, 0.16}
\definecolor{gblue}{RGB}{29, 144, 255}
\definecolor{royalblue}{rgb}{0.25, 0.41, 0.88}
\hypersetup{
  colorlinks=true,
  citecolor=royalblue,
  linkcolor=ruddy,
  urlcolor=royalblue}
  
\DeclareMathOperator*{\argmax}{arg\,max}
\DeclareMathOperator*{\argmin}{arg\,min}

\newtheorem{theorem}{Theorem}
\newtheorem*{remark}{Remark}
\newtheorem{lemma}{Lemma}
\newtheorem{proposition}{Proposition}
\newtheorem{assumption}{Assumption}

\newcommand{\diff}[3][]{\frac{\mathrm{d}^{#1} #2}{\mathrm{d} #3^{#1}}}

\begin{document}

\date{}

\maketitle
\begin{abstract}
While much of the rapidly growing literature on fair decision-making focuses on metrics for one-shot decisions, recent work has raised the intriguing possibility of designing sequential decision-making to positively impact long-term social fairness. In selection processes such as college admissions or hiring, biasing slightly towards applicants from under-represented groups is hypothesized to provide positive feedback that increases the pool of under-represented applicants in future selection rounds, thus enhancing fairness in the long term. In this paper, we examine this hypothesis and its consequences in a setting in which \emph{multiple} agents are selecting from a common pool of applicants. We propose the \emph{Multi-agent Fair-Greedy} policy, that balances greedy score maximization and fairness. Under this policy, we prove that the resource pool and the admissions converge to a long-term fairness target set by the agents when the score distributions across the groups in the population are identical. We provide empirical evidence of existence of equilibria under non-identical score distributions through synthetic and adapted real-world datasets. We then sound a cautionary note for more complex applicant pool evolution models, under which uncoordinated behavior by the agents can cause \emph{negative} reinforcement, leading to a reduction in the fraction of under-represented applicants. Our results indicate that, while positive reinforcement is a promising mechanism for long-term fairness, policies must be designed carefully to be robust to variations in the evolution model, with a number of open issues that remain to be explored by algorithm designers, social scientists, and policymakers.
\end{abstract}

\section{Introduction}

With the increasing use of machine learning models for decision-making systems with significant societal impact, such as recruitment~\cite{li_hiring}, criminal justice~\cite{dressel2018}, and credit lending~\cite{berkovec2018}, there is also growing concern that such models may inherit existing bias in data, perpetuating and potentially exacerbating discrimination against certain groups in the population. This has prompted a growing body of research focused on developing fair and unbiased models, with much of the early literature focused on imposing notions of statistical fairness, such as equal selection rates or equal true positive rates, in static frameworks through pre-processing~\cite{zemel2013learning,gordaliza2019obtaining,kamiran2012data}, in-processing~\cite{zhang2018mitigating,zafar2017fairness}, or post-processing~\cite{hardt2016equality} mechanisms. 

Going beyond concepts of one-shot fairness and unbiasedness, there is also increasing interest in the long-term impacts of automated decisions under various dynamical models. For example, the potentially adverse consequences of myopic fairness are pointed out in~\cite{liu2018delayed}: unanticipated feedback dynamics due to the decisions made may change population statistics in an undesirable manner. For example, credit lending decisions which equalize true positive rates across groups to satisfy statistical fairness might lead to loans being offered to less creditworthy applicants from a disadvantaged group.  Lower repayment rates from this group may then end up further decreasing its creditworthiness. On the other hand, feedback dynamics could also be used to shape population statistics in a desired direction. We take a step in this direction by hypothesizing that biasing slightly in favor of an underrepresented group in a selection problem (e.g., hiring or college admissions) provides positive reinforcement, increasing the proportion of applicants from that group in future selection rounds. 

While most prior works on exploring such dynamics consider sequential decision-making by a single agent, we take a first step towards exploring the dynamics and long-term impact of multiple decision-making agents competing for a common pool of resources. We consider a selection problem, where we wish to use positive feedback to increase the proportion of underrepresented applicants in the presence of multiple agents (e.g, universities or companies) selecting from among a common pool of applicants. In order to focus attention on feedback dynamics, we consider a simplified model in which the agents are strictly rank ordered in terms of desirability from the applicants' point of view. For this model, we provide a mathematical framework for studying whether some level of cooperation between the agents helps promote long-term fairness, and whether the concept of positive reinforcement is robust to variations in the evolution model. We propose the evolution model of \textit{positive reinforcement} where if a higher proportion of applicants from a specific group is selected, it will lead to an increase in the proportion of applicants from that group in subsequent rounds. We study variants of this basic model of reinforcement, a particularly interesting one being the notion of {\it role model reinforcement}. Here only a certain fraction of the admitted applicants (as opposed to all admitted applicants) are in a position to influence applicant proportions in the future, by virtue of being role models. The idea that role models in society, to which a group can relate, could positively influence more aspirants to enter a field is supported by several works in social sciences and economics~\cite{morgenroth2015motivational, bettinger2005faculty}.

\paragraph*{Contributions}
The contributions of this work are summarized as follows: 
\begin{itemize}
    \item We propose the Multi-agent Fair-Greedy (MFG) policy, in which agents operate in a decentralized fashion, maximizing a greedy utility (based on the scores of selected applicants) while minimizing the disparity from a fixed long-term fairness target (based on the deviation of the selected proportion of minority applicants from a target proportion deemed to be socially fair). We characterize optimal actions under this policy and theoretically demonstrate the convergence of the overall applicant pool and the admission proportions to the desired long-term fairness target under the \textit{pure positive reinforcement model} for the evolution of the composition of the applicant pool.
    \item Different population dynamics are studied empirically to evaluate the impact of variations in population behavior. We find that while the decentralized MFG policy attains long-term fairness under pure positive reinforcement, if the population dynamics follow a simple variant, termed the \textit{role model reinforcement}, uncoordinated behavior by agents can result in an overall {\it negative feedback,} leading to a steady decrease in the number of underrepresented applicants in the pool. We propose a centralized version of the MFG policy which can restore positive feedback.  
    \item We illustrate our mathematical framework through comprehensive experimental results based on synthetic and semi-synthetic datasets, highlighting variations in system behavior under different evolution models, and under decentralized and centralized MFG policies. These results show that positive reinforcement indeed has potential for promoting long-term fairness even with multiple agents, but that policy design must be carefully considered in order to be robust to changes in the evolution model. 
\end{itemize}

\section{Related Work}
There is a rich and rapidly growing literature on \textit{fair} strategies that mitigate bias in one-shot algorithmic decision making, including pre-processing the labels or data and reweighting costs based on groups~\cite{kamiran2012data}, reducing mutual information between sensitive attributes and predictions~\cite{cho2020fair,kamishima2012fairness}, adversarial de-biasing~\cite{zhang2018mitigating}, addition of constraints that satisfy fairness criteria~\cite{zafar2017fairness}, learning representations that obfuscate group information~\cite{zemel2013learning}, and other information theoretic methods~\cite{calmon2017optimized, ghassami2018fairness, kenfack2023learning, kairouz2022generating,shen2023fair}. However, recent research has introduced an intriguing possibility: the design of sequential decision-making strategies that can have a positive long-term impact on social fairness and a study of their consequence on the population.

The effects of fairness-aware decisions on underlying population statistics were first studied in~\cite{liu2018delayed} for two-stage models: in the first stage, the algorithmic decisions are designed such that fairness constraints such as statistical parity or equal opportunity are satisfied, while the second stage examines the impact of these fairness interventions on the groups. Each sample is associated with a score corresponding to the probability of a positive outcome, sampled from group-specific distributions. The policy-maker or institution chooses selection policies to maximize utility subject to statistical fairness. The measure of interest in~\cite{liu2018delayed} is the expected change in the mean of the score distributions for the two groups as a result of one step of feedback. It was found that imposing equal selection rates or true positives could lead to either improvement or cause harm, particularly to the minority group, depending on certain regimes. The results of~\cite{liu2018delayed} highlight the importance of going beyond static notions of fairness in algorithm/policy design.

Several works propose notions of statistical fairness focused on improving feature distributions among groups. For example,~\cite{heidari2019long} presents a fairness notion that equalizes the maximum change in reward for groups with the same effort budget for improving their features. The authors examine how fairness interventions impact evenness, centralization, and clustering in the groups through their efforts, affecting score distributions. In another example,~\cite{guldogan2023equal} proposes to equalize the proportion of unqualified candidates from different groups that can be qualified with a limited effort for improving their features. The authors investigate how statistical fairness notions change feature distributions among groups in the long term through modeling feature evolution.

Formal investigation of temporal effects of decision feedback and their equilibrium is typically performed in reinforcement learning~\cite{jabbari2017fairness, wen2021algorithms, heidari2018preventing,yin2023longterm} or bandit settings~\cite{joseph2018meritocratic,chen2020fair, gillen2018online,patil2020achieving,ghalme2021long}. The environment is described through a Markov Decision Process (MDP) framework where at each time, the decision-maker in a particular state takes an action and receives a reward. State transitions are governed by update models, and fairness constraints are included within reward definitions.

The long-term effects of fair decisions on the qualification rate of the group, which is defined as the probability of an individual from a particular group being qualified, is investigated in~\cite{zhang2020fair} under a partially observable MDP setting. The group proportions over time are fixed, but the decisions affect the feature distributions which in turn change their true qualification state, which is modeled as a hidden state. Decisions are performed on the features to maximize myopic instantaneous utility, subject to statistical fairness constraints. Threshold-based policies and their equilibrium are studied under two regimes: ``lack of motivation,'' where the probability of remaining qualified on receiving a positive decision is less than that on being rejected, and ``leg-up,'' which is the opposite, where an accepted individual becomes inspired to become more qualified. Studies such as ~\cite{mouzannar2019fair, williams2019dynamic} also examine the effects of fair policies on the distributions of the features. In particular,~\cite{mouzannar2019fair} studies how the qualification profiles of groups are influenced by a policy that imposes demographic parity (equal selection rates) across two groups of the population. They assume that social equity is achieved through equalized qualification profiles in equilibrium. The dynamic model in \cite{williams2019dynamic} is motivated by credit lending. The authors model the distributions of loan repayment likelihood (payback probabilities) by group, and examine the dynamics governed by the hypothesis that granting loans leads to upward mobility for the population if they are repaid. They study the impact of fair decisions on loan repayment likelihood and the negative effects of unequal misestimation of payback probabilities across groups, even if the decisions are fair. In contrast to studying the effects on group qualification, it is also imperative to understand how group representation could vary over time. It was first shown in~\cite{hashimoto} that empirical risk minimization can exacerbate the disparity in group representation. In the context of long-term fairness,~\cite{zhang2019group} study how algorithmic decisions which are constrained by statistical fairness could degrade the representation of a minority group, and eventually cause the loss of minority representation in the system.

\textit{Our prior work:} The work reported here builds upon preliminary results shown in our conference paper~\cite{puranik2022dynamic}, where we first introduce the problem of selecting applicants from a pool under the setting of a single institution. We introduce the {positive reinforcement model} for evolution of the applicant pool, which is similar in spirit to the ``leg-up'' model in~\cite{zhang2020fair}, except that it applies at the level of a population rather than an individual: selection of a larger proportion of applicants from a specific group feeds back into society and leads to an increase in the proportion of applicants from that group in future rounds. The score of an applicant, drawn from a group-specific distribution, is taken to represent the level of qualification of that individual. The Fair-Greedy policy, proposed in a single institution setting, greedily balances the sum of the scores of selected applicants against a fairness measure which increases with the deviation of the selected proportion of applicants from the under-represented group from a specified long-term target proportion. Theoretical results for identical score distributions across groups, and empirical results for other models, show convergence to a long-term fairness target set by the decision-maker. While the preliminary results in~\cite{puranik2022dynamic} show long-term fairness through positive feedback under a single institution, in this journal paper, we extend the promise of positive reinforcement to multiple-agents, consider variations in the evolution model, and find that careful design of feedback mechanism is required.

All of the preceding works on long-term fairness focus on settings with a single decision-maker accessing a pool of samples from the population. However, in real-world selection processes such as college admissions or hiring, there are often multiple agents or institutions competing for a common pool of applicants. The composition of the applicant pool can therefore be affected by the collective decisions of these agents. To the best of our knowledge, ours is the first work to study the long-term evolution of fairness in such a multi-agent setting. While two-sided matching problems, where multiple agents and resources are to be matched to each other, are well studied in resource allocation problems and game theoretic settings~\cite{dai2021learning, cho2022two}, the notion of long-term fairness in such settings is yet to be examined, and the definitions of fairness considered are quite distinct from those in fairness literature. Since our focus is on dynamical evolution, we sidestep the problem of matching between institutions (agents) and applicants (resources) by assuming that each applicant has the same fixed preference among the institutions.

\section{Modeling Multi-Agent Decision-Making}
Consider the problem of $K$ agents, or institutions, selecting from a common pool of resources, or applicants. The applicant pool is composed of applicants from two groups $g = \{0,1\}$, based on a sensitive attribute. Without loss of generality, we usually refer to $g=0$ as the minority group and $g=1$ as the majority group. We denote the number of applicants in round $t$, belonging to group $g$ by $N_t^g$, and the total number of applicants as $N_t^0 + N_t^1 = N_t$. Every applicant is associated with a score which is sampled from a group-dependent distribution $\mathcal{P}^g$. 

In many applications, the institutions have a growth in the program size, which is typically proportional to the population. For instance, the University of California system intends to serve top one-eighth of California's high school graduating class population~\cite{uc2024plan}. Motivated by this line of thought, we consider that every institution fixes the total number of applicants it can admit, through the {\it capacity} $c_k$ for institution $k$, set as a fraction of applicants it admits, out of the total number of applicants in the pool in that round. Thus we have $\sum_{k = 1}^{K} c_k < 1$. 

Each institution has to determine how to fill its admission slots, particularly by deciding how many minority and majority applicants are to be admitted, while satisfying two-fold objectives: (i) to accept applicants with the largest scores (ii) to achieve a long-term fairness target (denoted by $\alpha \in [0,1]$) in the admission of applicants, which measures the proportion of admitted applicants belonging to the minority group, $g=0$. The goal of the $K$ institutions is to move towards the long-term fairness target, while admitting applicants with the highest scores. We denote the number of applicants admitted by institution $k$ as $A_{k,t} = c_k N_t$, out of which $A_{k,t}^g$ belong to group $g$. Thus, we have $A_{k,t} =A_{k,t}^0 + A_{k,t}^1$, and the institutions hope to make decisions such that $A_{k,t}^0/A_{k,t}$ approaches $\alpha$ in equilibrium for all $k$.

We assume that the institutions are ranked, and that every applicant prefers institutions in the same order. Without loss of generality, we assume that the institutions are indexed by rank order, with $k=1$ corresponding to the highest-ranked institution. Since the preference order is fixed, the matching of applicants to institutions simplifies to institutions picking from the application pool sequentially, in their ranked order. The decision variable at each institution in each round is the fraction of admitted applicants from each group (selecting from the highest scoring applicants in each group, chosen from among the pool that remains after higher-ranked institutions have made their selections). We, therefore, have an effectively sequential selection process in each round, where the selections are started by the highest-ranked institution and completed by the lowest-ranked institution.

The problem is formulated as a Markov Decision Process (MDP). The state $s_t \in [0,1]$ is the fraction of applicants from group $g = 0$ in the applicant pool. Thus, at round $t$ we have $s_t = N^0_t/N_t$. An institution $k$ is associated with its respective action $a_t^k \in [0,1]$, which is the fraction of applicants admitted from group $g = 0$ among the total number of applicants the institution $k$ admits. Thus, $a_t^k = A^0_{k,t}/A_{k,t}$. The actions or the \textit{decision variables} of each of the institutions are to be determined based on optimizing their respective \textit{fairness-aware utilities}, defined below.

First, a set of $N_t$ applicants in round $t$ are associated with a set of scores $\{X^g_i\}$, where each $X^g_i$, $i \in [N^g_t$],  is sampled independently from $\mathcal{P}^g$. Additionally, $X^g_{(i)}$ denotes the $i^{th}$ top score out of $N^g_t$ applicants from group $g$. We define the score-based reward for an institution with rank $k$ as the expectation of the sum of scores of the applicants admitted by the institution normalized by the number of total applicants it admits. Note that for an institution $k$, this depends upon the actions of the preceding $[1, k-1]$-ranked institutions, as each institution selects sequentially from the remnant pool of applicants. For $k \in \{2,3\ldots,K\}$, we define $m_{k,t}^g$  as the number of applicants from group $g \in \{ 0,1 \}$ that have already been selected by higher ranked institutions $1,...,k-1$, with ${m}^{g}_{1,t} = 0$. Thus, we have the score-based reward:
\begin{equation}
    R_{k}(s_t, a_{t}^k)
    = \frac{1}{A_{k,t}} \mathbb{E}\left[\sum_{i = {m}^{0}_{k,t}+ 1}^{{m}^{0}_{k,t}+A^0_{k,t}} X^0_{(i)} + \sum_{i = {m}^{1}_{k,t}+1}^{{m}^{1}_{k,t}+A^1_{k,t}} X^1_{(i)} \right] =\frac{1}{A_{k,t}} \mathbb{E}\left[\sum_{i = {m}^{0}_{k,t} +1}^{{m}^{0}_{k,t}+ a_{t}^k A_{k,t}} X^0_{(i)} + \sum_{i = {m}^{1}_{k,t}+ 1}^{{m}^{1}_{k,t}+(1 - a_{t}^k)A_{k,t}} X^1_{(i)} \right]
    \label{eqn:score_reward}  
\end{equation}
with the group-specific offsets given by
\begin{equation}
\textit{m}^{0}_{k,t}  = \sum_{j = 1}^{k-1} c_j N_t a_t^{j} \text{ and }  \textit{m}^{1}_{k,t}  = \sum_{j = 1}^{k-1} c_j N_t (1-a_t^{j}).
\end{equation}
Each institution is associated with its fairness loss, which is the squared difference between the proportion of group $g=0$ out of the total admitted, and the long-term fairness target $\alpha$, given by $L_{k}(a_t^k) = (a_t^k - \alpha)^2.$ The considered fairness loss is a simple measure of disparity from the long-term fairness target, and penalizes the bias against both groups on the long run.

The ``multi-agent fairness aware'' utility of each institution is the sum of its {\it score-based reward} and {\it fairness loss}, expressed as 
\begin{equation}
    U_k(s_t, a_{t}^k) = R_{k}(s_t, a_{t}^k) - \lambda L_{k}(a_{t}^k).\label{eqn:utility}
\end{equation}
where $\lambda\geq 0$ is a parameter that governs balancing fairness objective with score-based reward.

Thus each institution aims to maximize its score-based reward by admitting applicants with the highest scores, while minimizing its disparity from the long-term fairness target.  

\noindent
\paragraph*{Multi-agent Fair-Greedy Policy}
We propose the {\it Multi-agent Fair-Greedy (MFG) policy}, which is a set of policies $\pi_{MFG}(s_t) =\{\pi^{1}_t, \pi^{2}_t\, \ldots, \pi^{K}_t\}$, where each institution optimizes its own multi-agent fairness aware utility as follows:
\begin{equation}
    \pi^{k}_t  = \argmax_{a_{t}^k\in\mathcal{A}_{t}^k} U_k(s_t, a_{t}^k).
\end{equation}
where $\mathcal{A}_{t}^k$ is the set of feasible actions for institution $k$ in round $t$, which depends on the actions of higher ranked institutions and the state $s_t$. After higher ranked institutions have made their selections, the feasible action space for institution $k$ is determined by the remaining capacity of the institution and the remaining applicants in the pool, and can be written as $\mathcal{A}_{t}^k =  [\max(0,1-\frac{1-s_t-\sum_{j=1}^{k-1}(1-\pi^{j}_t)c_j}{c_k}), \min(1,\frac{s_t-\sum_{j=1}^{k-1}\pi^{j}_t c_j}{c_k})]$. For each institution, optimization of the instantaneous multi-agent fairness aware utility is equivalent to finding group-specific set of thresholds on the score distributions.  In our rank-ordered model for institutions, each institution admits applicants whose scores exceed its threshold, choosing from the pool remaining after higher-ranked institutions have made their selections. However, we reiterate that the state $s_t$ represents the original minority group proportion in the pool at round $t$.

\noindent\paragraph*{Resource pool evolution} A key ingredient of our formulation is modeling the manner in which the collective decisions of the institutions affect the pool of resources seen in the subsequent rounds of the problem. This is a complex relationship requiring data collected from carefully designed long-term experiments. While such data is not yet available, we can develop valuable insights by exploring different models for how the applicant pool might be shaped by institutional policies.

We assume that the state $s_t$ is a random variable with mean $\theta_t$ and bounded variance. We model the number of applicants from group $g=0$ and $g=1$ as being sampled from Poisson distributions, with $N^0_t \sim\text{Poisson}(\theta_t N)$ and $N^1_t \sim \text{Poisson}((1-\theta_t) N)$, where $N$ is the expected number of applicants in the pool. The total number of applicants in round $t$ is given by $N_t = N^0_t + N^1_t$.\footnote{When both $N^0_t$ and $N^1_t$ are equal to zero, the round is completed without any admissions.} The choice of the Poisson distribution is for the sake of simplicity, and the outcomes of our analysis remain independent of this particular choice of distribution. We consider different models for the evolution of $\theta_t$ in this paper, focusing first on the model of {\it pure positive reinforcement}. In this model, a higher proportion of admission of a particular group in comparison to its application proportion, motivates more applicants from the group to participate in future selection rounds. Here the evolution follows: 
\begin{equation}
    \theta_{t+1} = \left[ \theta_t + \eta_t (\pi^{W}_{t} - s_t) \right]_{\mathcal{C}}
    \label{eqn:pure_pos_reinf}
\end{equation}
where $[ \ ]_\mathcal{C}$ is the projection onto the set $\mathcal{C} = [0+\epsilon, 1- \epsilon]$ (or, simply clipping the mean parameter) where $\epsilon$ is a small positive number to avoid the mean parameter from reaching the boundary of the set and it implies that the any group cannot be completely eliminated from the pool. Here, $\eta_t$ is a step size parameter and $\pi^{W}_t$ represents the weighted actions of all the institutions.

In particular, under the MFG policy, $\pi^W_t$ depends upon on the policies of all the institutions weighted by their capacities, which is also equivalent to the proportion of minority group among all the admitted applicants, defined as:
\begin{equation}
    \pi^W_t = \frac{\sum_{k = 1}^{K} c_k \pi^{k}_t}{\sum_{k = 1}^{K} c_k}.
    \label{eqn:pi_W}
\end{equation}
Thus, the collective decisions of all institutions, in particular their admission of applicants from a particular group, promote more such applicants to participate in the process, and shape the evolution dynamics.

In the remainder of this section, we characterize the MFG policy in a decentralized setting, under the pure positive reinforcement model. We show that the optimal policy for each institution depends on actions that optimize the score-based reward alone, and the fairness loss alone. We therefore first derive the reward-optimal and fairness-optimal actions for each institution, and then characterize the MFG policy $\pi^k_t, k=1,...,K$ in terms of these actions. We then show that, for pure positive reinforcement and identical score distributions $\mathcal{P}^{g}$, the applicant pool (state) and the admission proportions (actions) for all institutions converge to the long-term fairness target. We use the following assumptions in our convergence analysis:
\begin{assumption}\label{asm:large}
The expected number of applicants in the pool $N$ is large enough that the empirical distribution of scores can be replaced by statistical distributions using the law of large numbers.
\end{assumption}
This assumption is required for the asymptotic analysis of the applicant pool and admission proportions. It is also used to derive the reward-optimal actions for the institutions.
\begin{assumption}\label{asm:identical}
The score distributions $\mathcal{P}^{g}$ are identical across groups.
\end{assumption}
This assumption implies that the only difference in the groups is the proportion of applicants from each group in the pool at each round. We relax this assumption in Section~\ref{sec:exps_synt} and provide empirical evidence for the existence of equilibria of the applicant pool proportion and the admission proportions.

\paragraph*{Fairness-optimal action} Each institution can minimize its fairness loss by setting its action to be equal to the long-term fairness target:
\begin{equation}
    a^{k}_{F,t} = \argmin_{a_t\in\mathcal{A}_{t}^k} L_{k}(a_t) = [\alpha]_{\mathcal{A}_{t}^k}, \quad \forall k \in \{1,2,\ldots, K\}. \label{eqn:opt_fair_action}
\end{equation}
where $[\alpha]_{\mathcal{A}_{t}^k}$ is the projection of $\alpha$ onto the feasible action space $\mathcal{A}_{t}^k$.

The next theorem characterizes the reward-optimal actions for the institutions.

\begin{theorem}
\label{thm:opt_score_act}
Under Assumptions~\ref{asm:large} and~\ref{asm:identical}, the score-based reward function, $R_{k}(s_t, a_t^k)$, is concave and the reward-optimal action for an institution $k$ is given by
\begin{equation}
    a^{k}_{S,t} = \argmax_{a_t^k\in \mathcal{A}_{t}^k} R_{k}(s_t, a_t^k)  =  \left[ s_t + \frac{1}{c_k}\left( \sum_{j = 1}^{k-1}c_j (s_t - \pi^{j}_t )\right)\right]_{\mathcal{A}_{t}^k}, \quad \forall k \in \{2,3,\ldots, K\}
    \label{eqn:opt_score_action}
\end{equation}
and $a^{1}_{S,t} = s_t$ where $[\cdot]_{\mathcal{A}_{t}^k}$ denotes the projection onto the feasible action space $\mathcal{A}_{t}^k$.
\end{theorem}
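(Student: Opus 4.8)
The plan is to collapse the expectation of a sum of order statistics into a one-dimensional integral of a single quantile function, after which both the concavity and the optimizer drop out of one differentiation. First I would invoke Assumption~\ref{asm:large} to replace the discrete sums of top order statistics by integrals. Writing $F$ for the common CDF (which is shared by both groups under Assumption~\ref{asm:identical}) and $Q(u)=F^{-1}(1-u)$ for the \emph{non-increasing} upper-quantile function, the $i$-th largest of $N^g_t$ scores concentrates at $Q(i/N^g_t)$, so for any contiguous block of top ranks,
\[
\mathbb{E}\!\left[\sum_{i=m+1}^{m+n} X^g_{(i)}\right] \;\approx\; N^g_t \int_{m/N^g_t}^{(m+n)/N^g_t} Q(u)\,\mathrm{d}u \;=\; N^g_t\,\big(H(q)-H(p)\big),
\]
where $H(u)=\int_0^u Q(v)\,\mathrm{d}v$, $p=m/N^g_t$, and $q=(m+n)/N^g_t$. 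Since $Q$ is non-increasing, $H'=Q$ and $H''=Q'\le 0$, so $H$ is concave.

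Next I would substitute $N^0_t=s_tN_t$, $N^1_t=(1-s_t)N_t$, $A_{k,t}=c_kN_t$, and the offsets $m^0_{k,t},m^1_{k,t}$ into \eqref{eqn:score_reward}, reducing the reward to
\[
R_k(s_t,a_t^k) = \frac{s_t}{c_k}\big(H(q_0)-H(p_0)\big) + \frac{1-s_t}{c_k}\big(H(q_1)-H(p_1)\big),
\]
where the \emph{lower} fractions $p_0=m^0_{k,t}/(s_tN_t)$ and $p_1=m^1_{k,t}/((1-s_t)N_t)$ are fixed by the higher-ranked institutions and do not depend on $a_t^k$, while the \emph{upper} fractions are affine in the decision variable, $q_0=p_0+a_t^kc_k/s_t$ and $q_1=p_1+(1-a_t^k)c_k/(1-s_t)$.

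The heart of the argument is then a short differentiation. Using $\mathrm{d}q_0/\mathrm{d}a_t^k=c_k/s_t$ and $\mathrm{d}q_1/\mathrm{d}a_t^k=-c_k/(1-s_t)$ together with $H'=Q$, the coefficients cancel and the first derivative reduces to $\mathrm{d}R_k/\mathrm{d}a_t^k = Q(q_0)-Q(q_1)$, while the second derivative is $\tfrac{c_k}{s_t}Q'(q_0)+\tfrac{c_k}{1-s_t}Q'(q_1)\le 0$ because $Q$ is non-increasing; this establishes concavity. Setting the first derivative to zero gives $Q(q_0)=Q(q_1)$, and strict monotonicity of $Q$ forces $q_0=q_1$. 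This is an affine equation in $a_t^k$; solving it and substituting $m^0_{k,t}=N_t\sum_{j<k}c_j\pi_t^j$ and $m^1_{k,t}=N_t\sum_{j<k}c_j(1-\pi_t^j)$, so that the cross term simplifies to $\sum_{j=1}^{k-1}c_j(s_t-\pi_t^j)$, produces the claimed unconstrained root $s_t+\tfrac{1}{c_k}\sum_{j=1}^{k-1}c_j(s_t-\pi_t^j)$. Because $R_k$ is concave and $\mathcal{A}_t^k$ is an interval, the constrained maximizer is exactly the projection of this root onto $\mathcal{A}_t^k$, yielding \eqref{eqn:opt_score_action}; for $k=1$ all offsets vanish, $q_0=q_1$ becomes $a/s_t=(1-a)/(1-s_t)$, and $a^1_{S,t}=s_t$.

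I expect the main obstacle to be the first step rather than the calculus: making the order-statistic-to-integral passage clean under Assumption~\ref{asm:large} (uniform control of the empirical quantile process and integrability of $Q$ near $u=0$), and handling the monotonicity of $Q$ carefully so that $Q(q_0)=Q(q_1)$ indeed yields $q_0=q_1$. If $\mathcal{P}$ has atoms or flat regions, $Q$ is only weakly decreasing and the maximizer may be a non-unique plateau, though any point of it still satisfies the stated formula after projection. The remaining work is bookkeeping: checking that $\mathcal{A}_t^k$ is precisely the range of $a_t^k$ keeping both $q_0,q_1\in[0,1]$ (i.e.\ that no institution oversubscribes either group), so the projection in \eqref{eqn:opt_score_action} is well defined.
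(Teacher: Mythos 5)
Your proposal is correct and follows essentially the same route as the paper's proof (Lemma~\ref{lem:concave_score_reward} plus the affine solve in Appendix~\ref{sec:app_opt_greedy_action}): your derivative $Q(q_0)-Q(q_1)$ is exactly the paper's $t_0^{\text{k,S,low}}-t_1^{\text{k,S,low}}$, your second-derivative bound is the paper's expression in terms of $1/f_g(\cdot)$ after the change of variables $x=Q(u)$, and the equal-threshold condition plus projection is identical. Your parametrization via the antiderivative $H$ of the upper-quantile function is just a cosmetic reformulation of the paper's conditional-expectation/threshold bookkeeping, and your caveats about atoms and flat regions of $Q$ are reasonable points the paper leaves implicit.
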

The detailed proof of this theorem can be found in Appendix~\ref{sec:app_opt_greedy_action}.

\begin{remark}
Under the asymptotic regime of Assumption \ref{asm:large} (i.e., assuming that empirical distributions of applicant scores can be replaced by statistical distributions), the proof of Theorem \ref{thm:opt_score_act} can rely on the following simplification. For institution $k$, the selection of applicants with top scores is equivalent to thresholding the group’s score, and admitting applicants with scores within the group-specific lower and upper thresholds, denoted by $t_g^{\text{k,S,low}}$ and $t_g^{\text{k,S,up}}$ respectively. The upper threshold $t_g^{\text{k,S,up}}$ equals the lower threshold of institution $k-1$ for group $g$. In Lemma~\ref{lem:concave_score_reward} we show that for reward optimality for any institution $k$, its group-specific lower thresholds should be equal; $t_0^{\text{k,S,low}} = t_1^{\text{k,S,low}}$, if possible. While this holds even if the group-specific score distributions are different, we can derive closed-form expressions for reward-optimal actions as in Theorem~\ref{thm:opt_score_act} when the distributions are identical across groups.
\end{remark}

\begin{remark}
The optimal policy of institution $k$ that maximizes its multi-agent fairness aware utility can be expressed as a convex combination of its pre-projected optimal score-based reward action and optimal fair-only actions, as $\pi^{k}_{t} = \gamma_{k,t} \left(s_t + \frac{1}{c_k}\left( \sum_{j = 1}^{k-1}c_j (s_t - \pi^{j}_t )\right)\right) + (1-\gamma_{k,t}) \alpha$, where $\gamma_{k,t} \in [0,1]$ (please refer to Appendix~\ref{sec:app_weighted_MFG_expr} for details). The closed-form expression for $\gamma_{k,t}$ is not available in general, but it can be numerically computed due to the concavity of the fairness-aware utility function. It can be shown that the reward-optimal action can be expressed as:
\begin{equation}
       a^{k}_{S,t} = s_t + \frac{(s_t-\alpha)}{c_k}\left(\sum_{j = 1}^{k-1}c_j {\displaystyle \prod_{i = j}^{k-1}} (1 - \gamma_{i,t} )\right). \label{eqn:reward_optimal_action_v2}
\end{equation}
Thus, when the hyperparameter $\lambda>0$, if $s_t < \alpha$, for $k \in \{2,3,\ldots,K\}$ we have $a^{k}_{S,t} < s_t$, and vice-versa. Note that, for a special case where there is a single institution, $\pi^{1}_t$ lies between the state $s_t$ and long-term fairness target $\alpha$. But that is not necessarily the case for $\pi^{k}_t$ for $k>1$, when there are multiple institutions.
\end{remark}

In the following lemma, we characterize the weighted policy of the institutions, that ultimately governs the pool evolution \eqref{eqn:pure_pos_reinf}.
\begin{lemma}
\label{lem:weighted_policy}
Under Assumptions~\ref{asm:large} and~\ref{asm:identical}, the weighted MFG policy $\pi^W_t$ can be expressed as the following under the pure positive reinforcement model:
\begin{equation}
    \pi^W_t = s_t + (\alpha-s_t)\frac{\displaystyle\sum_{j = 1}^{K} c_j \displaystyle \prod_{i = j}^{K} (1 - \gamma_{i,t} )}{\displaystyle\sum_{j = 1}^{K} c_j } = s_t + (\alpha-s_t)\bar{\gamma}_{t},
\label{eqn:weighted_MFG_expr}
\end{equation}
where $\bar{\gamma}_{t}\in[0,1].$
\end{lemma}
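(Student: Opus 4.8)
The plan is to feed the convex-combination form of each institution's optimal action into the definition \eqref{eqn:pi_W} of $\pi^W_t$ and then exploit the nested-product structure of \eqref{eqn:reward_optimal_action_v2} to collapse the resulting sum by telescoping. First I would record the two facts already in hand from the preceding remarks: the representation $\pi^k_t = \gamma_{k,t}\,a^k_{S,t} + (1-\gamma_{k,t})\,\alpha$ of each action as a convex combination of the (pre-projected) reward-optimal action and the fairness-optimal action $\alpha$, together with the closed form $a^k_{S,t} = s_t + \tfrac{(s_t-\alpha)}{c_k} S_k$, where I abbreviate $S_k := \sum_{j=1}^{k-1} c_j \prod_{i=j}^{k-1}(1-\gamma_{i,t})$ (so $S_1 = 0$, and I suppress the fixed $t$ in $S_k$). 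Substituting the closed form and rearranging gives the compact per-institution identity $c_k(\pi^k_t - s_t) = (s_t - \alpha)\bigl[\gamma_{k,t}(c_k + S_k) - c_k\bigr]$, so that $\bigl(\sum_{k} c_k\bigr)(\pi^W_t - s_t) = \sum_{k} c_k(\pi^k_t - s_t)$ is governed entirely by the bracketed quantity.

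The crux is then to recognize the telescoping hidden in the nested products. I would introduce the partial sums $T_k := \sum_{j=1}^{k} c_j \prod_{i=j}^{k}(1-\gamma_{i,t})$ with $T_0 := 0$, and note two things: that $S_k = T_{k-1}$, and that $T_K$ is precisely the numerator appearing in the claimed $\bar\gamma_t$. Peeling off the $j=k$ summand and factoring $(1-\gamma_{k,t})$ out of the rest yields the one-step recursion $T_k = (1-\gamma_{k,t})(c_k + S_k)$. Using it, the bracketed term rewrites as a first difference, $c_k - \gamma_{k,t}(c_k + S_k) = (c_k + S_k)(1-\gamma_{k,t}) - S_k = T_k - T_{k-1}$.

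Summing over $k \in \{1,\dots,K\}$ then telescopes to $\sum_{k=1}^K \bigl[c_k - \gamma_{k,t}(c_k + S_k)\bigr] = T_K - T_0 = T_K$, so that $\sum_{k} c_k(\pi^k_t - s_t) = (\alpha - s_t)\,T_K$. Dividing by $\sum_k c_k$ and substituting the definition of $T_K$ reproduces the stated identity $\pi^W_t = s_t + (\alpha - s_t)\,\bar\gamma_t$ with $\bar\gamma_t = T_K/\sum_{j} c_j$. To finish I would verify $\bar\gamma_t \in [0,1]$: each $\gamma_{i,t}\in[0,1]$ forces $1-\gamma_{i,t}\in[0,1]$, hence $0 \le c_j\prod_{i=j}^K(1-\gamma_{i,t}) \le c_j$ termwise; summing and dividing by $\sum_j c_j > 0$ gives $0 \le \bar\gamma_t \le 1$.

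The main obstacle is organizational rather than analytic: one has to spot the right auxiliary sequence $T_k$ and check the recursion $T_k = (1-\gamma_{k,t})(c_k + S_k)$, after which the collapse is automatic. I would also be careful that no closed form for $\gamma_{k,t}$ is needed anywhere — every step is an algebraic identity valid for each fixed realization of the $\gamma_{k,t}\in[0,1]$, so the fact that these weights are only characterized numerically (per the remark) never enters the argument.
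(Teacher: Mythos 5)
Your telescoping argument is, in substance, the same as the paper's: the auxiliary sequence $T_k=\sum_{j=1}^{k}c_j\prod_{i=j}^{k}(1-\gamma_{i,t})$ and the recursion $T_k=(1-\gamma_{k,t})(c_k+S_k)$ are exactly the ``add and subtract $(\alpha-s_t)\sum_{j=1}^{k-1}c_j\prod_{i=j}^{k-1}(1-\gamma_{i,t})$'' step in the appendix, just packaged more cleanly, and your per-institution identity $c_k(\pi^k_t-s_t)=(\alpha-s_t)(T_k-T_{k-1})$ checks out. The termwise bound for $\bar\gamma_t\in[0,1]$ also matches.

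The one genuine gap is at the very start: you take as ``already in hand'' that $\pi^k_t$ is a convex combination of $\alpha$ and the \emph{pre-projected} reward-optimal action $s_t+\tfrac{s_t-\alpha}{c_k}S_k$. That is not free. What follows from concavity and monotonicity of the utility is only that $\pi^k_t$ lies between the two \emph{projected} optima $[\alpha]_{\mathcal{A}^k_t}$ and $a^k_{S,t}=[\,\cdot\,]_{\mathcal{A}^k_t}$; if either pre-projected optimum falls outside the feasible interval $\mathcal{A}^k_t$, your substitution of the unprojected closed form is not obviously valid, and the telescoping identity would break. The paper's proof spends its first half closing exactly this hole: a case analysis showing that the two pre-projected optima cannot both lie to the left (or both to the right) of $\mathcal{A}^k_t$, which uses the capacity condition $\sum_{j=1}^{K}c_j<1$. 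The remark you cite for this fact itself defers to the appendix proof of this lemma, so invoking it here is circular. Add that projection argument (or an equivalent justification that a valid $\gamma_{k,t}\in[0,1]$ exists for the pre-projected endpoints) and the proof is complete.
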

Please refer to Appendix~\ref{sec:app_weighted_MFG_expr} for proof. 

We utilize this result to prove the convergence of the applicant pool and admission proportions to the long-term fairness target. In the following theorem, we first show that the target proportion is a unique fixed-point of the pool evolution update. We then show that the weighted policy lies between the applicant and target proportions, as a result of which the mean state parameter and the state converge to the target. Although our analysis is for identical group-wise score distributions, we later provide empirical evidence of equilibrium in state and actions of all institutions when the score distributions are different across groups.

\begin{theorem}
\label{thm:conv_mfg}
Under Assumptions~\ref{asm:large},~\ref{asm:identical} and $\lambda\neq0$, the weighted MFG policy is such that $\pi_{t}^W \in (s_t, \alpha)$ if $s_t < \alpha$, or $\pi_{t}^W \in (\alpha, s_t)$ if $s_t > \alpha$. Further, $\alpha$ is a unique fixed-point of the weighted policy $\pi_t^{W}$. In addition, the applicant pool proportion converges to the long-term fairness target $\alpha$, if the step size parameter $\eta_t$ is decaying with time and satisfies the assumptions that $\sum_t \eta_t =  \infty$ and $\sum_t \eta_t^2 < \infty$. Further, the admission proportions of all institutions approach the long-term fairness target at equilibrium.
\end{theorem}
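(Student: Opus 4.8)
The plan is to dispatch the four assertions in sequence, reducing each to the one-line characterization of the weighted policy from Lemma~\ref{lem:weighted_policy}, namely $\pi_t^W = s_t + (\alpha - s_t)\bar{\gamma}_t$ with $\bar{\gamma}_t \in [0,1]$, and then to read the pool update \eqref{eqn:pure_pos_reinf} as a projected stochastic-approximation recursion driven toward $\alpha$.

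First, for the betweenness claim I would rewrite the expression as $\pi_t^W = (1-\bar{\gamma}_t)s_t + \bar{\gamma}_t\,\alpha$, which exhibits $\pi_t^W$ as a convex combination of $s_t$ and $\alpha$ and so places it in the closed interval between them. To obtain the \emph{open} interval I would upgrade $\bar{\gamma}_t \in [0,1]$ to $\bar{\gamma}_t \in (0,1)$ whenever $\lambda \neq 0$ and $s_t \neq \alpha$: since $\bar{\gamma}_t = \bigl(\sum_j c_j \prod_{i=j}^K(1-\gamma_{i,t})\bigr)/\sum_j c_j$, it suffices to argue each utility weight $\gamma_{i,t}$ lies strictly inside $(0,1)$, which holds because with $\lambda>0$ the strictly concave utility $U_k$ is genuinely traded off between the (distinct) reward-optimal and fairness-optimal actions, so neither extreme is selected. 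The fixed-point claim is then immediate: a stationary mean requires $\pi_t^W = s_t$, i.e. $(\alpha - s_t)\bar{\gamma}_t = 0$, and since $\bar{\gamma}_t > 0$ this forces $s_t = \alpha$, giving existence and uniqueness of the fixed point at $\alpha$ (and $\alpha \in \mathcal{C}$, so the clipping is inactive there).

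For convergence of the pool I would substitute $\pi_t^W - s_t = (\alpha - s_t)\bar{\gamma}_t$ into \eqref{eqn:pure_pos_reinf}, yielding the Robbins--Monro iteration $\theta_{t+1} = \bigl[\theta_t + \eta_t(\alpha - s_t)\bar{\gamma}_t\bigr]_{\mathcal{C}}$. Taking $V(\theta)=(\theta-\alpha)^2$ and using that projection onto $\mathcal{C}$ is nonexpansive about $\alpha\in\mathcal{C}$, I would expand $\mathbb{E}\bigl[V(\theta_{t+1})\mid\mathcal{F}_t\bigr] \le V(\theta_t) + 2\eta_t(\theta_t-\alpha)\,\mathbb{E}[(\alpha-s_t)\bar{\gamma}_t\mid\mathcal{F}_t] + \eta_t^2\,\mathbb{E}[(\pi_t^W-s_t)^2\mid\mathcal{F}_t]$. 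The second-order term is $O(\eta_t^2)$ since every quantity lies in $[0,1]$, hence summable by $\sum_t\eta_t^2<\infty$. Using $\mathbb{E}[s_t\mid\mathcal{F}_t]=\theta_t$ together with the concentration of $s_t$ about $\theta_t$ afforded by Assumption~\ref{asm:large}, I would show the cross-term factor $\mathbb{E}[(\alpha-s_t)\bar{\gamma}_t\mid\mathcal{F}_t]$ has the sign of $(\alpha-\theta_t)$ and is bounded below in magnitude by a positive multiple of $|\alpha-\theta_t|$, so the cross term is at most $-c\,\eta_t\,V(\theta_t)$ for some $c>0$. Robbins--Siegmund supermartingale convergence then gives that $V(\theta_t)$ converges almost surely and $\sum_t\eta_t V(\theta_t)<\infty$; combined with $\sum_t\eta_t=\infty$ this forces $V(\theta_t)\to 0$, i.e. $\theta_t\to\alpha$, and concentration yields $s_t\to\alpha$.

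Finally, for the admissions I would induct on rank $k$. Once $s_t\to\alpha$, the base case $\pi_t^1 = s_t\to\alpha$ holds, and substituting $s_t=\alpha$ with $\pi_t^j=\alpha$ for $j<k$ into the reward-optimal form \eqref{eqn:reward_optimal_action_v2} (equivalently into the convex-combination expression for $\pi_t^k$) collapses every correction term to zero, giving $\pi_t^k\to\alpha$; the weighted policy $\pi_t^W\to\alpha$ then follows from \eqref{eqn:pi_W}. I expect the \emph{main obstacle} to be the convergence step, specifically verifying the sign and lower bound of the conditional drift $\mathbb{E}[(\alpha-s_t)\bar{\gamma}_t\mid\mathcal{F}_t]$: because $\bar{\gamma}_t$ is itself a nonlinear function of the random state $s_t$, its correlation with $(\alpha-s_t)$ cannot be dispatched by linearity and must instead be controlled through the concentration in Assumption~\ref{asm:large}. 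Ensuring the clipping projection stays inactive near the limit (so that $\alpha\notin\partial\mathcal{C}$) is a secondary technical point.
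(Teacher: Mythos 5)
Your proposal is correct and follows essentially the same route as the paper's proof: both rest on Lemma~\ref{lem:weighted_policy}'s convex-combination form $\pi_t^W=(1-\bar{\gamma}_t)s_t+\bar{\gamma}_t\alpha$ to get betweenness and the unique fixed point (the paper establishes $\bar{\gamma}_t>0$ by showing $\gamma_{i,t}\neq 1$ institution by institution, just as you sketch), and both then run a stochastic-approximation argument with the quadratic Lyapunov function $(\theta_t-\alpha)^2$, nonexpansiveness of the clipping projection, the step-size conditions, and a martingale/concentration treatment of the noise $s_t-\theta_t$. The only cosmetic difference is that you package the drift analysis via Robbins--Siegmund while the paper carries out an explicit $\epsilon$--$\delta$ descent argument, and the obstacle you flag --- uniformly lower-bounding the drift through $\bar{\gamma}_t$ --- is handled at the same level of detail in the paper (asserted from the structure of the weighted policy under identical score distributions).
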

The proof of this theorem can be found in Appendix~\ref{sec:app_weighted_MFG_expr}.
\begin{remark}
    For identical group-wise distributions as considered in Theorem \ref{thm:conv_mfg}, the convergence to the target $\alpha$ holds irrespective of the value of the hyperparameter $\lambda$ weighing fairness in the utility function, as long as it is positive ($\lambda>0$).
    For non-identical score distributions across groups, we provide empirical evidence for the existence of equilibria of the applicant pool proportion and the admission proportions in Section~\ref{sec:exps_synt}. However, the actual value of $\lambda>0$ is now found to be important in determining how close to the fairness target we get. 
\end{remark}

\noindent\paragraph*{What happens if we ignore fairness?} 

In the scenario of identical score distributions, if each institution completely disregards the fairness objective and optimizes only its score-based reward, as seen from \eqref{eqn:reward_optimal_action_v2} where $\gamma_{k,t} = 1$, it follows that $\pi^k_t = a^{k}_{S,t} = s_t$ for all $k$. Consequently, the mean parameter $\theta_t$ would not experience any drift, and the composition of the applicant pool would remain unchanged. While each institution greedily acquires the best applicants by sacrificing diversity, there is no hope of influencing the participation of the underrepresented. However, biasing slightly in favor of the underrepresented group by introducing the fairness objective could lead to both $\theta_t$ and $\pi^k_t$ converging to the long-term fairness target.

\section{Variations on the Pool Evolution Model}
Achieving long-term fairness relies on how the applicant pool evolves with institutional decisions over time. In this section, we introduce variations of the pure positive reinforcement model, and examine how the MFG policy fares under these new variants of the evolution model termed (i) order-based (ii) weighted (iii) role-model reinforcement. The bulk of our discussion in this section focuses on the motivation and study of role-model reinforcement and pitfalls under it, while we first briefly discuss the other two below: 

\paragraph*{Order-based positive reinforcement}
This simple variant allows control over the strength of positive reinforcement by raising the feedback in pure positive reinforcement to a power $\beta > 0$:
\begin{equation}
    \theta_{t+1} = \left[ \theta_t + \eta_t \text{sign}(\pi^{W}_{t} - s_t)|\pi^{W}_{t} - s_t|^\beta\right]_{\mathcal{C}}.
    \label{eqn:order_pos_reinf}
\end{equation}
Pure positive reinforcement is a special case of the above model for $\beta = 1$. The feedback is amplified for $\beta < 1$ and is attenuated for $\beta > 1$, since $0 \leq |\pi^{W}_{t} - s_t| \leq 1$.

\paragraph*{Weighted positive reinforcement}
The second simple variation allows custom weights $\{ z_k > 0 \}$ on the institutions' policies
\begin{equation}
    \pi^z_t = \frac{\sum_{k = 1}^{K} z_k \pi^{k}_t}{\sum_{k = 1}^{K} z_k},
    \label{eqn:weight_pol}
\end{equation}
with pool evolution
\begin{equation}
    \theta_{t+1} = \left[ \theta_t + \eta_t (\pi^{z}_{t} - s_t)\right]_{\mathcal{C}}.
    \label{eqn:weight_pos_reinf}
\end{equation}
Here $z_k = c_k$ reduces back to pure positive reinforcement, which gives greater influence to an agent with higher capacity. The more general models \eqref{eqn:weight_pol}-\eqref{eqn:weight_pos_reinf} here allow us more flexibility in assigning influence. For example, we can give equal influence to all institutions by setting $z_k = z > 0$.

\paragraph*{Role model reinforcement}
Thus far, we have assumed that all selected applicants in an institution have equal ability to influence the reinforcement. However, applicants who do well post-selection in a given institution, who we term {\it role models,} could influence future applicant pools significantly more than other admitted applicants in the institution. The motivation for this model stems from the fact that in practice, success in an institution depends not just on qualifications at the time of admission, but on support mechanisms within the institution to aid in continued growth and success of the admitted applicants, as well as on circumstances that may be difficult to characterize. However, to develop analytical insight into what happens when positive reinforcement occurs due to the applicants who do well post-selection, we consider a score-based criterion for identifying these role models: among all applicants (from both groups) admitted to institution $k$, the role models are composed of a fraction $r \in [0,1]$ of applicants with the highest scores.  From among this group, the fraction of minority (group $g=0$) applicants, denoted by $r^k_t$, drive the pool evolution. We now develop the notation required to define this model precisely.

Let $\mathcal{X}_t^k$ denote the set of scores of all applicants admitted by the institution $k$ at time $t$,
\begin{equation}
\mathcal{X}_t^k=\left\{X^0_{({m}^{0}_{k,t}+1)},X^0_{({m}^{0}_{k,t}+2)},\dots,X^0_{({m}^{0}_{k,t}+A^0_{k,t})}, X^1_{({m}^{1}_{k,t}+1)},X^1_{({m}^{1}_{k,t}+2)},\dots,X^1_{({m}^{1}_{k,t}+A^1_{k,t})}\right\}.
\end{equation}
The role models for each institution are selected as top $r$ proportion of the admissions with the highest scores, $\mathcal{R}_t^k={\rm argmax}_{\mathcal{X}'\subset \mathcal{X}_t^k, |\mathcal{X}'|=\lfloor rA_{k,t}\rfloor}\sum_{x\in \mathcal{X}'} x.$ Then, let $r_t^k$ denote the fraction of the minority group in the role models set $\mathcal{R}_t^k$ for the institution $k$ at round $t$. It can be written as:
\begin{equation}
    r_t^k = \frac{\left|\mathcal{R}_t^k\cap\left\{X^0_{({m}^{0}_{k,t}+1)},X^0_{({m}^{0}_{k,t}+2)},\dots,X^0_{({m}^{0}_{k,t}+A^0_{k,t})}\right\}\right|}{|\mathcal{R}_t^k|}.
    \label{eqn:role_model}
\end{equation}
Then, the pool evolution model depends on the fraction of the applicants from the minority group among the role models, $r_t^k$, instead of the fraction of the applicants from the minority in the admissions, $\pi^{k}_t$. The weighted role model parameter is defined as
\begin{equation}
    \pi^r_t = \frac{\sum_{k = 1}^{K} c_k r_t^k}{\sum_{k = 1}^{K} c_k}.
    \label{eqn:role_pol}
\end{equation}
If a group has a higher proportion of role models in comparison to its application proportion, it will provide positive reinforcement. The pool update is then governed by:
\begin{equation}
    \theta_{t+1} = \left[ \theta_t + \eta_t (\pi^{r}_{t} - s_t)\right]_{\mathcal{C}}.
    \label{eqn:role_pos_reinf}
\end{equation}
For $r=1$, role model reinforcement reduces back to pure positive reinforcement. Setting $r = 0.5$ implies that minority applicants whose scores are above the median score for admitted applicants in their institution are role models driving pool evolution.

While we showed that convergence to long-term fairness target is assured under pure positive reinforcement, we shall now see that it is not the case, under this variant evolution model. First, let us focus first on a special case when $K=1$. Under a limiting case of role-model reinforcement when $r=1$, convergence to long-term fairness is guaranteed. In the other extreme case, when the bar for role models is extremely high, i.e., $r=\epsilon$ (small), $\pi^r_t = r^k_t \approx s_t$. Thus, although the MFG policy biases in favor of minority group, there could be no drift in the mean state parameter, leading to a stagnation in the composition of the applicant pool. However, when there are multiple institutions involved, this could lead to explicit negative feedback, causing the proportion of minority in the applicant pool to steadily reduce. We now show in Proposition~\ref{prop:role_model} below that role model reinforcement can lead to negative feedback under the MFG policy, specifically when there are multiple institutions($K>1$). Assuming identical group-wise score distributions, we show that if the long-term fairness target is higher than the initial proportion of the minority group, the first institution will admit more minority applicants than their application proportion. This results in the subsequent institutions having a higher proportion of top admissions from the majority group, leading to a lower proportion of minority group role models. This can eventually cause the minority group to be removed from the applicant pool, as formally demonstrated next, and empirically supported by experiments in Section~\ref{sec:exps_synt}. Furthermore, we empirically demonstrate that coordinated behavior by the agents could help in alleviating this negative feedback.

\begin{proposition}
\label{prop:role_model}
Under Assumptions~\ref{asm:large} and~\ref{asm:identical}, for role model reinforcement there exists a role model parameter $r$ that is small enough such that the MFG policy can cause negative feedback leading to the loss of representation of the under-represented group in the pool.
\end{proposition}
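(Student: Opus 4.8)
The plan is to exhibit an explicit regime in which the weighted role-model parameter $\pi^r_t$ falls strictly below the current minority fraction $s_t$, so that the update \eqref{eqn:role_pos_reinf} drives $\theta_t$ downward. Assume the initial state satisfies $s_t < \alpha$. By the convex-combination characterization of the MFG action (the Remark following Theorem~\ref{thm:opt_score_act}, together with Lemma~\ref{lem:weighted_policy}), the top-ranked institution over-admits the minority, $\pi^{1}_t \in (s_t,\alpha)$, so $\pi^{1}_t > s_t$. Under Assumptions~\ref{asm:large} and~\ref{asm:identical}, selecting the top scorers is equivalent to group-wise thresholding of the common score CDF $F$: institution $1$ admits minority applicants with scores above $\tau_0$ and majority applicants with scores above $\tau_1$, where $1-F(\tau_0)=\pi^{1}_t c_1/s_t$ and $1-F(\tau_1)=(1-\pi^{1}_t)c_1/(1-s_t)$. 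First I would show that $\pi^{1}_t>s_t$ forces $\tau_0<\tau_1$: biasing toward the minority lowers their admission bar and raises the majority's, since $\pi^{1}_t/s_t > (1-\pi^{1}_t)/(1-s_t)$ gives $F(\tau_0)<F(\tau_1)$.

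Next I would compute the role-model fractions. For institution $1$, the highest-scoring admitted applicants (scores above $\tau_1$) contain both groups with their population densities, so the minority fraction among them is exactly $s_t$; hence for $r$ small enough that all $\lfloor rA_{1,t}\rfloor$ role models lie above $\tau_1$, we get $r^{1}_t=s_t$. The essential step is the residual pool passed to institution $2$: the remaining minority applicants have scores below $\tau_0$, while the remaining majority applicants have scores up to $\tau_1>\tau_0$, so the top of institution $2$'s available pool consists entirely of the high-scoring majority in $[\tau_0,\tau_1)$ that institution $1$ declined in order to favor the minority. Because thresholding always admits the top scorers within each group, institution $2$'s admitted majority are the highest remaining ones (near $\tau_1$) and lie above every admitted minority score (all below $\tau_0$); thus for $r$ small enough that the $\lfloor rA_{2,t}\rfloor$ top slots are filled before any admitted minority is reached, its role models are purely majority, $r^{2}_t=0<s_t$.

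I would then aggregate using \eqref{eqn:role_pol}. With $r^{1}_t=s_t$, $r^{2}_t=0$, and the bound $r^{k}_t\le s_t$ for $k\ge 3$ (obtained by propagating the same threshold argument down the ranks), the definition gives $\pi^r_t \le s_t - c_2 s_t/\sum_{k}c_k < s_t$, where strictness uses $K>1$ so that $c_2/\sum_k c_k>0$; this is exactly where the multi-agent assumption enters, since for $K=1$ one has $\pi^r_t=r^{1}_t=s_t$ and no drift. Substituting into \eqref{eqn:role_pos_reinf} yields $\theta_{t+1}<\theta_t$ under the LLN replacement of Assumption~\ref{asm:large}. Finally I would argue persistence: the downward drift keeps $s_t<\alpha$, so the ordering $\tau_0<\tau_1$ and hence the sign of the drift recur at every round; since the leftover majority mass $(1-s_t)\bigl(F(\tau_1)-F(\tau_0)\bigr)$ only grows as $s_t$ shrinks, a single $r$ small enough to work uniformly over $s_t\in[\epsilon,1-\epsilon]$ suffices, and the mean parameter is driven to the clipping boundary $\epsilon$, i.e.\ the loss of minority representation.

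The hard part will be the threshold bookkeeping for the lower-ranked institutions $k\ge 2$: one must verify that the high-scoring majority that institution $1$ (and each subsequent biased institution) leaves behind is sufficient to fill the top-$r$ role-model slots of institution $k$ before any of its admitted minority is reached, so that no lower institution's role-model fraction overshoots $s_t$ and cancels the drift. Controlling this simultaneously for all $k\ge 2$, together with making the persistence argument rigorous under the projection $[\ ]_{\mathcal{C}}$, is the main obstacle; the single-step negative-drift computation and the $K=1$ degeneracy are comparatively routine.
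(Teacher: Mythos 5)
Your proposal follows essentially the same route as the paper's proof: the top institution's bias toward the minority lowers its minority threshold below its majority threshold, so for small $r$ the group-independent role-model cutoff sits above the majority threshold, giving $r^1_t = s_t$ while the lower-ranked institutions—whose admitted minority scores all lie below the minority scores institution $1$ already absorbed—get $r^k_t < s_t$, hence $\pi^r_t < s_t$ and a persistent negative drift. The only cosmetic difference is that you push institution $2$'s role-model fraction all the way to $0$, whereas the paper settles for the weaker bound $r^k_t < s_t$ via the CDF ratio with $t_0^{\text{k,up}} < t_1^{\text{k,up}}$, which sidesteps most of the threshold bookkeeping you flag as the hard part.
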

Please see Appendix~\ref{sec:app_role_model_neg_fb} for proof.

\begin{remark}
The evolution of the applicant pool under role model reinforcement is such that only admitted applicants from group $g=0$ with scores larger than a {\it group-independent threshold} $t_r^k$, will contribute to reinforcing the pool. The threshold increases as we reduce the role model parameter $r$, raising the bar for being a role model. Let $t_g^{\text{k, low}}$ and $t_g^{\text{k, up}}$ denote the lower and upper thresholds of the MFG policy for group $g$ and institution $k$. The key idea behind the proof is that when the initial mean parameter $\theta_t$ is small, there exists a small enough $r$ such that for institution $k$, we have $t_r^k \geq t_1^{\text{k, low}} > t_0^{\text{k, low}}$. Using this, we show that the role model proportions are such that $r_t^k < s_t$ for $k \in \{2,3,\ldots, K\}$ and $r_t^1 = s_t$. Hence, the mean parameter obtains a negative drift, and the proportion of the minority group in the applicant pool approaches zero. 
\end{remark}

In order to mitigate the effects of such negative feedback, we define a centralized version of the MFG policy in the next section.

\subsection{Centralized Multi-agent Fair-Greedy Policy}
The selection process under the original MFG policy, in which the institutions do not cooperate, is sequential based on the ranking order of the institutions. We now consider decision-making by a central coordinator which knows the utility of each institution, and maximizes the sum utility across institutions:  
\begin{equation}
    U(s_t, a_{t}^1,a_{t}^2,\dots, a_{t}^K) = \sum_{k=1}^K U_k(s_t, a_{t}^k).
    \label{eqn:total_util}
\end{equation}

Thus, the {\it Centralized Multi-agent Fair-Greedy (CMFG) policy} is defined as a set of policies \(\pi_{CMFG}(s_t) =\{\pi^{1}_t, \pi^{2}_t\, \ldots, \pi^{K}_t\}\) maximizing the total utility over the space of joint actions as:
\begin{align}
    \{\pi^{1}_t, \pi^{2}_t\, \ldots, \pi^{K}_t\}  = & \argmax_{a_{t}^1,a_{t}^2,\dots, a_{t}^K} U(s_t, a_{t}^1,a_{t}^2,\dots, a_{t}^K) \nonumber \\ 
    &\text{ subject to } 0\leq a_{t}^k \leq 1, \quad \sum_{k=1}^K a_{t}^kc_k\leq s_t, \quad \sum_{k=1}^K (1-a_{t}^k)c_k\leq 1-s_t.
\end{align}

The main difference between the MFG and CMFG policies lies in the way that institutions work together. Intuitively, we can view the set of policies as the joint imposition of group-dependent thresholds corresponding to all institutions simultaneously in order to maximize the total utility, as opposed to the earlier decentralized version, where a higher ranked institution imposes its thresholds without consideration of downstream effects on the utilities of lower-ranked institutions. Thus, in CMFG policy, institutions collaborate with each other to maximize their total utility. This allows a central coordinator to distribute the cost of fairness among all institutions, potentially leading to a selection policy that is different from sequential decentralized selection. In particular, the effects of the negative feedback loop observed with the sequential selection under role model reinforcement can be mitigated, as shown in the experimental results in Section~\ref{sec:exps_synt}.

In summary, the discussion of the role model reinforcement variant model serves the following objectives within the context of this study. Firstly, it allows us to simulate a realistic and plausible scenario wherein only a subset of admitted applicants possesses the capacity to exert influence on future participation dynamics. To illustrate this concept, we designate top applicants from each institution as role models, albeit with an acknowledgment of the inherent complexity of truly characterizing successful applicants in real-world situations. This abstraction enables us to underscore the notion that a simplistic strategy of biasing in favor of minority applicants alone does not suffice as a comprehensive solution for fostering future participation. Instead, it underscores the critical importance of institutional support for the admitted applicants to facilitate their growth and eventual success. The effectiveness of this support hinges upon the mechanisms and extent of backing provided by each institution. We show that as a growing number of admitted applicants receive the necessary support to attain a level of success that qualifies them as symbolic role models, the potential for positive reinforcement substantially increases. The precise means by which institutions implement and coordinate these support mechanisms constitutes an open question of paramount significance, particularly for policymakers tasked with shaping equitable participation.
\section{Experimental Evaluation}
We empirically evaluate the proposed MFG policy under different models for the evolution of the applicant pool. We begin with experiments on synthetic data, where the scores are sampled from group-specific Gaussian distributions, followed by learning the score distributions from real-world datasets. 
The MFG and centralized MFG policies are computed numerically and the optimal policies are obtained using the grid search over the space of policies. The number of applicants is kept finite in the experiments.
The code for our experiments is available in a  repository.~\footnote{\href{https://github.com/guldoganozgur/long_term_fairness_pos_reinf}{https://github.com/guldoganozgur/long\_term\_fairness\_pos\_reinf}}

\subsection{Multi-agent framework evaluated on synthetic data}
\label{sec:exps_synt}
\paragraph*{Identical score distributions}
We first consider the case when the score distributions across the two groups in the population are the same. The parameter setting employed for these experiments is listed next. The long-term fairness target is set at $\alpha=0.4$. The number of institutions is $K=3$ with capacities $c_1 = 0.1$, $c_2 = 0.05$ and $c_3 =0.2$, resulting in a total capacity of $0.35$. The score distributions are Gaussian, with means $\mu_0=\mu_1=5$ and variances $\sigma_0^2=\sigma_1^2=1$. The initial mean parameter is $\theta_0 = 0.25$, giving the minority group lower representation in the resource pool. 
The range of the mean parameter is $[0.01,0.99]$, and the mean parameter is projected to the range at each iteration. The state $s_t$ is defined as $\frac{N_t^0}{N_t^0+N_t^1}$, where $N^0_t$ follows a Poisson distribution with parameter $\theta_t N$, $N^1_t$ follows a Poisson distribution with parameter $(1-\theta_t) N$, and $N = 400$. For computational efficiency, once the state is computed, the total number of applicants is fixed to $400$, and the number of applicants from each group is adjusted according to the state and then rounded to the nearest integer for numerical convenience. 
Other parameters include $\lambda=0.75$ and the step-size is fixed as $\eta=0.5$. All the plots are averaged over $200$ instances of the problem.
\begin{figure}[t]
     \centering
     \begin{subfigure}[t]{0.32\columnwidth}
         \centering
         \includegraphics[width=\textwidth]{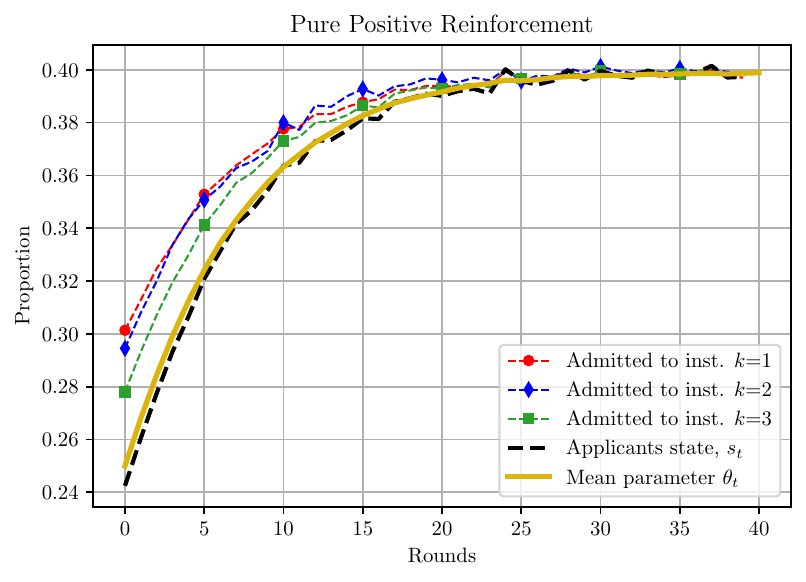}
         \caption{}
         \label{fig:1a}
     \end{subfigure}
     \hfill
     \begin{subfigure}[t]{0.32\columnwidth}
         \centering
         \includegraphics[width=\textwidth]{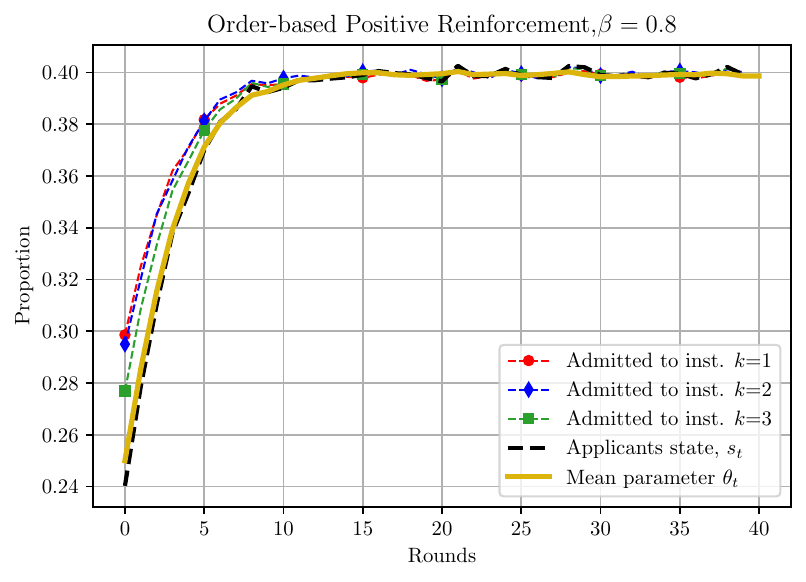}
         \caption{}
         \label{fig:1b}
     \end{subfigure}
     \hfill
     \begin{subfigure}[t]{0.32\columnwidth}
         \centering
         \includegraphics[width=\textwidth]{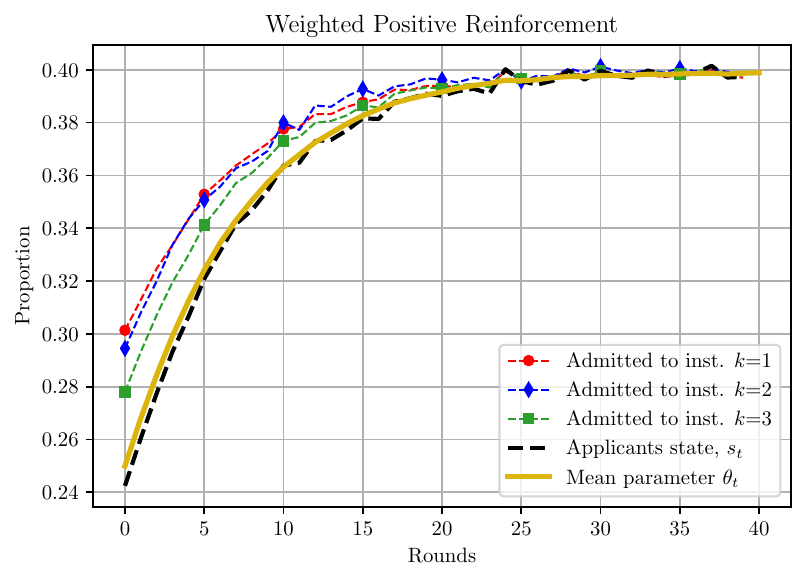}
         \caption{}
         \label{fig:1c}
     \end{subfigure}
     \caption{~\subref{fig:1a} With pure positive reinforcement, the MFG policy reaches long-term fairness when score distributions are identical for both groups.~\subref{fig:1b} The MFG policy attains long-term fairness more quickly with order-based positive reinforcement when $\beta=0.8$.~\subref{fig:1c} Under weighted positive reinforcement, the MFG policy converges when institution weights are equal, for this setting}
    \label{fig:synt_1}
\end{figure}

We demonstrate the evolution of the applicant and admission proportions, and the mean parameter, for the MFG policy under different reinforcement models. In Figure~\ref{fig:1a}, we can observe that under the pure positive reinforcement model, for each institution, the admission proportion is larger than the applicant proportion (state), and hence the weighted MFG policy being larger than the applicant proportion results in the positive reinforcement of the applicant pool (observed through the evolution of mean parameter $\theta_t$) and long-term fairness in admissions as well. Next, we focus on the robustness of the MFG policy under different evolution models. The order-based reinforcement in Figure~\ref{fig:1b} uses $\beta=0.8$, and shows that faster convergence can be achieved in comparison to the pure positive reinforcement model. The weighted positive reinforcement model in Figure~\ref{fig:1c} uses identical weights $w_k=1$ for all institutions, showcasing the pool evolution when equal importance is assigned to every institution. In both these cases, for the setting considered, the MFG policy leads to the achievement of long-term fairness.

Further, we show the evolution of the mean applicant pool parameter under pure positive reinforcement model and the MFG policy for different weights, $\lambda$, allotted to fairness loss in Figure~\ref{fig:pure_pos_identical_lambda}, where we observe that the mean parameter achieves the same long-term fairness target in equilibrium, independent of the hyperparameter $\lambda$, under identical score distributions, and long as $\lambda>0$. However, the rate of convergence depends on $\lambda$.

\begin{figure}[t]{} 
  \centering
  \includegraphics[width=0.32\textwidth]{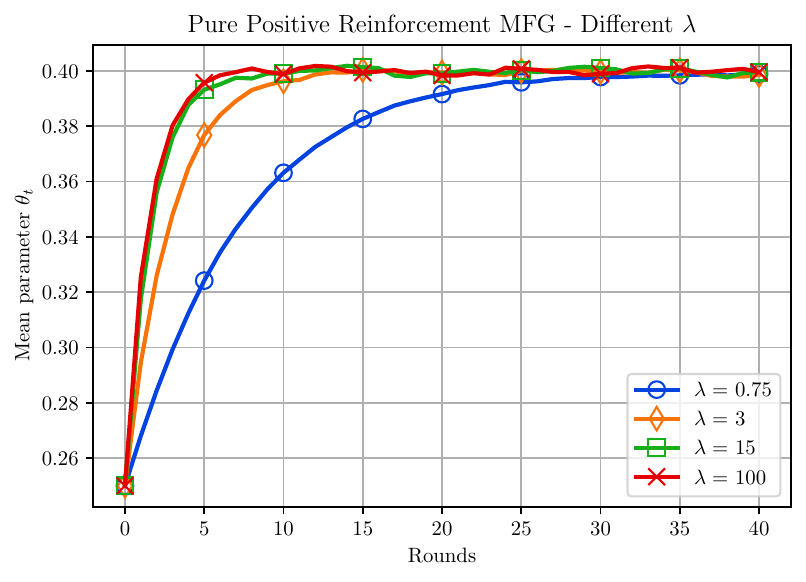} 
  \caption{MFG policy under identical scores reaches long-term fairness target, independent of $\lambda$.}
  \label{fig:pure_pos_identical_lambda} 
\end{figure}

\begin{figure}[t]
    \centering
     \begin{subfigure}[t]{0.32\columnwidth}
         \centering
         \includegraphics[width=\textwidth]{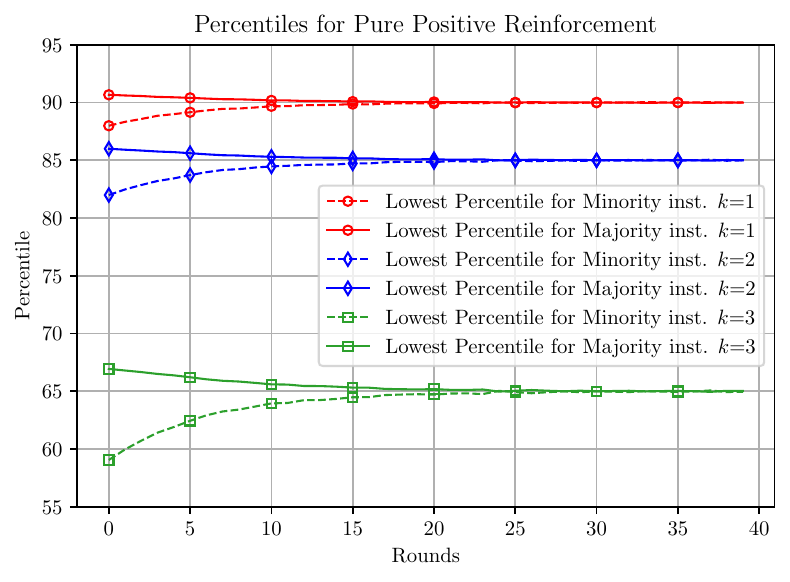}
         \caption{}
         \label{fig:pure_pos_percentiles}
     \end{subfigure}
     \begin{subfigure}[t]{0.32\columnwidth}
         \centering
         \includegraphics[width=\textwidth]{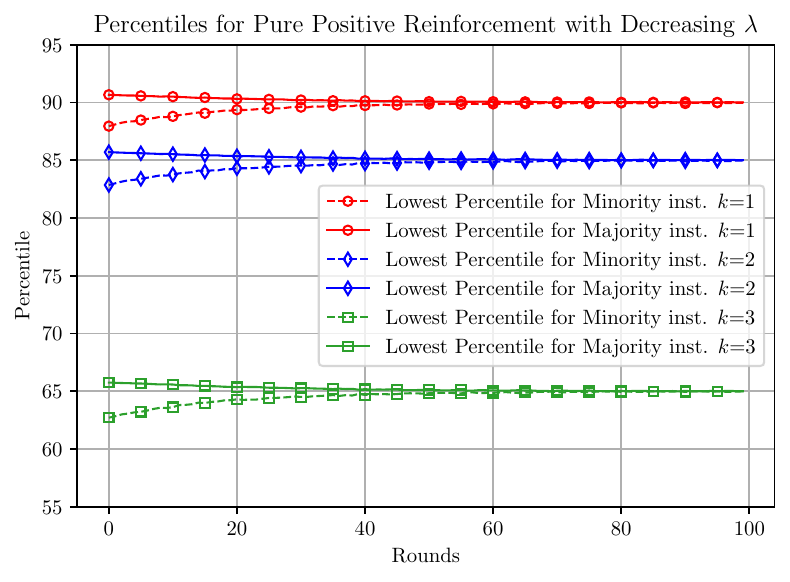}
         \caption{}
         \label{fig:pure_pos_percentiles_dec_lambda}
     \end{subfigure}
    \begin{subfigure}[t]{0.32\columnwidth}
         \centering
         \includegraphics[width=\textwidth]{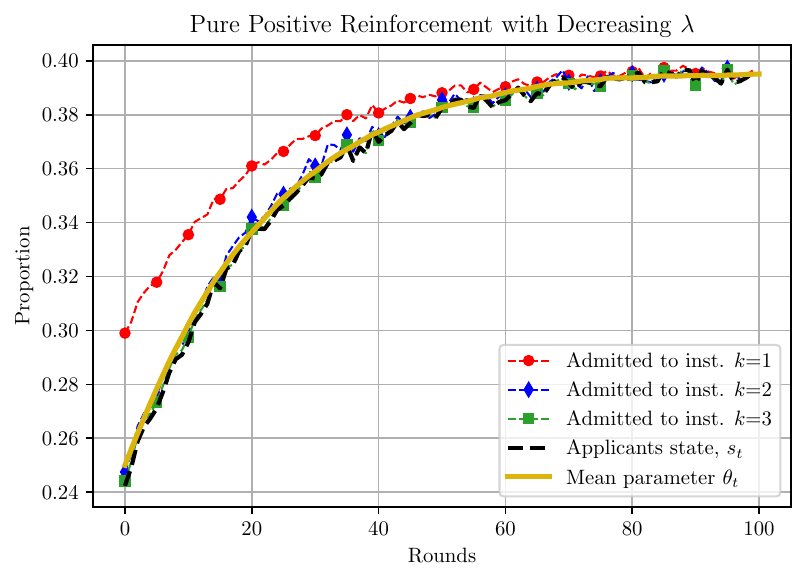}
         \caption{}
         \label{fig:pure_pos_dec_lambda}
     \end{subfigure}
    \caption{The score percentile of the admitted applicant with the least score, from each group and for each institution, under the setting where (i) $\lambda=0.75$ for all institutions (in~\subref{fig:pure_pos_percentiles}) and (ii) $\lambda$ is in decreasing order for the three institutions (in~\subref{fig:pure_pos_percentiles_dec_lambda}). The evolution of mean parameter and admission proportions under the case of decreasing $\lambda$ is in~\subref{fig:pure_pos_dec_lambda}}.
    \label{}
\end{figure}

Next, we gain valuable insights into the operational dynamics of the MFG policy. Our objective is to comprehend the trade-offs made, in terms of scores of admitted applicants of the majority and minority groups, to foster fairness. Figure~\ref{fig:pure_pos_percentiles} depicts the percentile at which the admitted applicant with the lowest score is positioned, for both the groups and all institutions, when $\lambda=0.75$ across all participating institutions. Next, in Figure~\ref{fig:pure_pos_percentiles_dec_lambda}, we extend our examination to a scenario in which the $\lambda$ values are decremented across institutions as $[0.75,0.375,0.1875]$, signifying that lower-ranked institutions temporarily de-prioritize diversity to minimize a significant decline in the quality of admitted applicants. Consequently, this approach illustrates how a judicious delay in the pursuit of immediate fairness objectives by lower-ranked institutions can assist in averting a large drop in their admission standards. However, this strategy of decreasing $\lambda$ impacts the overall convergence rate (see Fig.~\ref{fig:pure_pos_dec_lambda}), as all institutions experience a deferment in the applicant pool reaching the long-term fairness objective. This serves as an example of the delicate balance that institutions must navigate between the convergence to fairness targets and maximization of short-term rewards. Nonetheless, the precise strategies and mechanisms to be adopted by individual institutions in achieving this balance remains open.

\begin{figure}[t!]
    \centering

  \begin{subfigure}[t]{0.24\columnwidth}
         \centering
         \includegraphics[width=\textwidth]{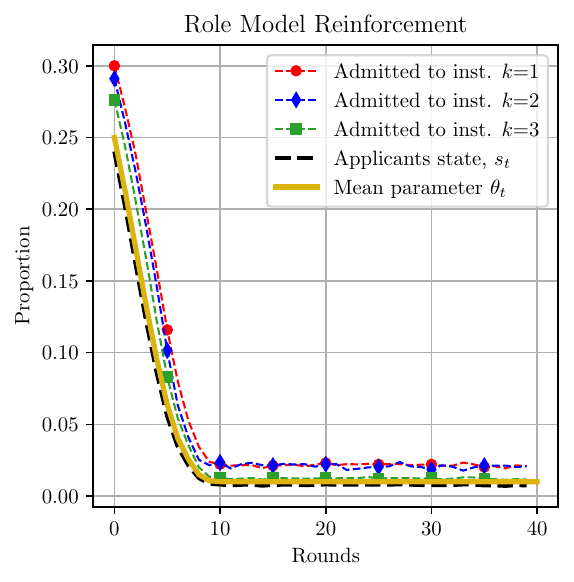}
         \caption{}
         \label{fig:mfg_role_model_identical}
     \end{subfigure}
     \begin{subfigure}[t]{0.24\columnwidth}
         \centering
         \includegraphics[width=\textwidth]{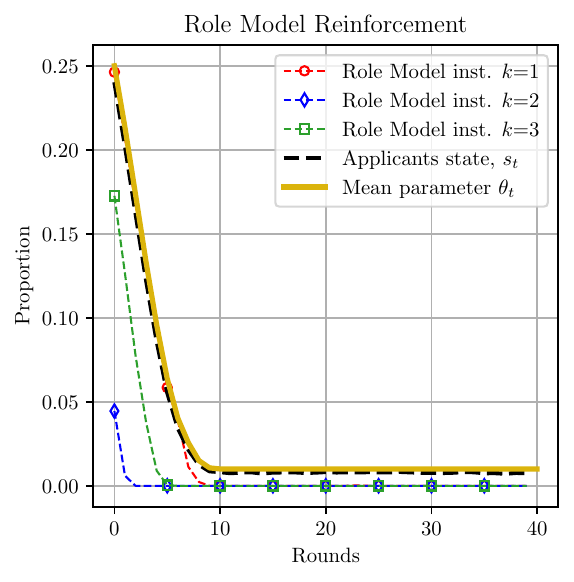}
         \caption{}
         \label{fig:mfg_role_model_identical_role_models}
     \end{subfigure}
     \begin{subfigure}[t]{0.24\columnwidth}
         \centering
         \includegraphics[width=\textwidth]{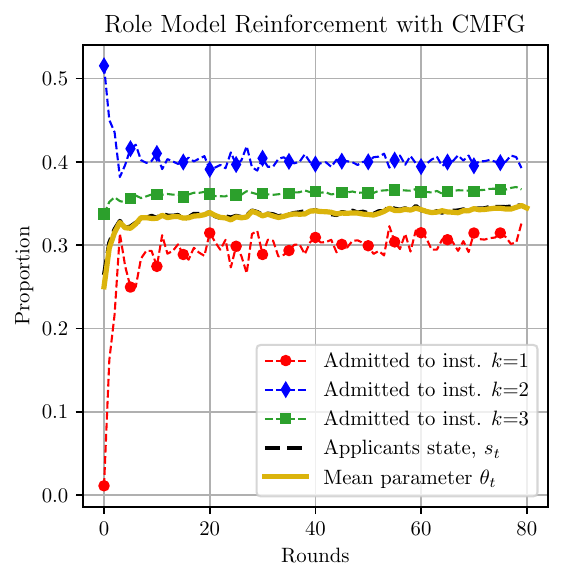}
         \caption{}
         \label{fig:cmfg_role_model_identical}
     \end{subfigure}
     \begin{subfigure}[t]{0.24\columnwidth}
         \centering
         \includegraphics[width=\textwidth]{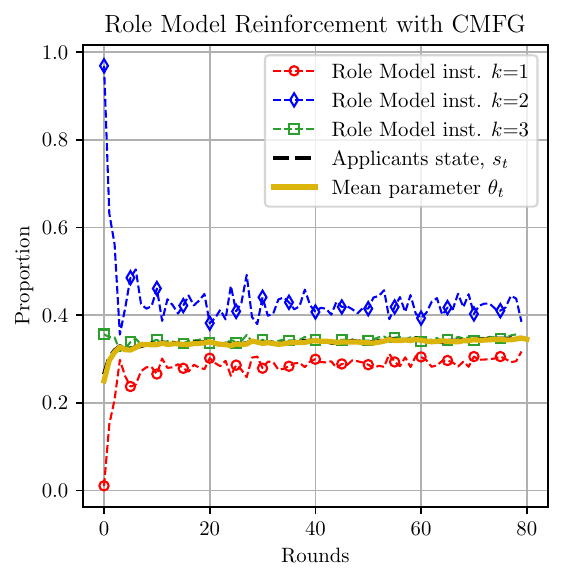}
         \caption{}
         \label{fig:cmfg_role_model_identical_roles}
     \end{subfigure}
     \caption{~\subref{fig:mfg_role_model_identical} MFG policy creates a negative feedback loop under the role model reinforcement.~\subref{fig:mfg_role_model_identical_role_models} The evolution of the proportions of role models for each institution, under MFG policy.~\subref{fig:cmfg_role_model_identical} CMFG policy could potentially alleviate negative feedback under role model reinforcement.~\subref{fig:cmfg_role_model_identical_roles}The evolution of the proportions of role models for each institution, under CMFG policy.}
\end{figure}

\noindent
\paragraph*{Potential for negative feedback} The evolution dynamics of role model reinforcement, under the MFG policy is examined here. We consider a scenario where role model reinforcement is applied with a parameter of $r=0.5$, i.e., the admissions with scores above the median score in the respective institution are considered role models for the resource pool evolution. As seen in Figure~\ref{fig:mfg_role_model_identical}, the MFG policy creates a negative feedback loop, causing a significant decrease of the minority group in the applicant pool over time. This is because the role model proportions $r_t^k$ are such that the weighted role model policy is consistently smaller than the state. In particular, as seen in Figure~\ref{fig:mfg_role_model_identical_role_models}, the second and the third institutions see a very small fraction of role models among the minority group, since the first institution admits the top minority applicants. These effects could potentially be alleviated by considering a centralized policy, such as the CMFG policy, whose evolution for the admitted applicants and the proportion of the role models under this setting are shown in Figure~\ref{fig:cmfg_role_model_identical} and~\ref{fig:cmfg_role_model_identical_roles}. Although the initial dynamics between the institutions under CMFG policy are not desirable in the real-world and not well understood, it shows a potential of ultimately attaining an equilibrium close to the long-term fairness target. We also remark that the adverse effects under role model reinforcement in conjunction with the MFG policy could be avoided by increasing the $r$ parameter, which essentially means that the institutions must design intervention mechanisms or remedies to support a large fraction of the admissions to eventually be successful in society, and influence others by being true role models. 

For completeness, we show the evolution of applicant and admission proportions under the CMFG policy under pure positive reinforcement, order-based and weighted positive reinforcement models in Appendix~\ref{app:iden_score_dists}.

\paragraph*{Distinct score distributions}
\begin{figure}[t]
     \centering
     \begin{subfigure}[t]{0.32\columnwidth}
         
         \includegraphics[width=\textwidth]{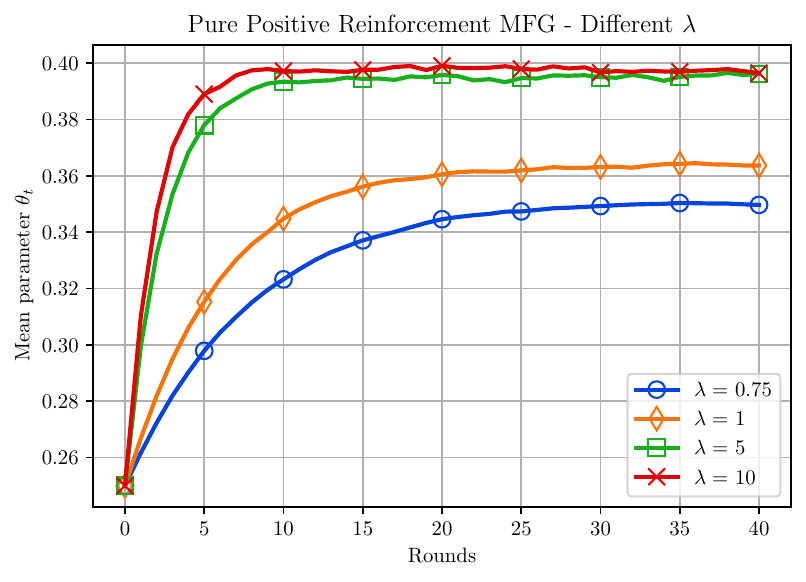}
         \caption{}
         \label{fig:5a}
     \end{subfigure}
     \hfill
     \begin{subfigure}[t]{0.32\columnwidth}
         \centering
         \includegraphics[width=\textwidth]{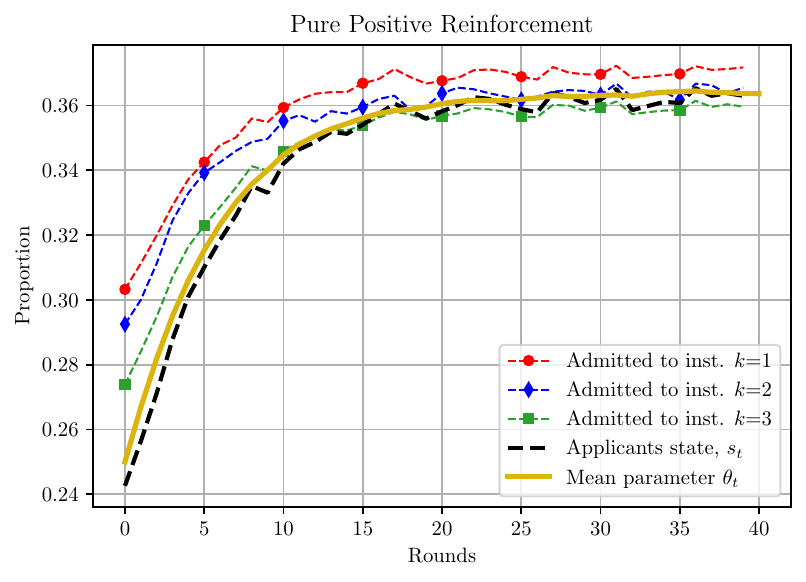}
         \caption{}
         \label{fig:5b}
     \end{subfigure}
     \hfill
     \begin{subfigure}[t]{0.32\columnwidth}
         \centering
         \includegraphics[width=\textwidth]{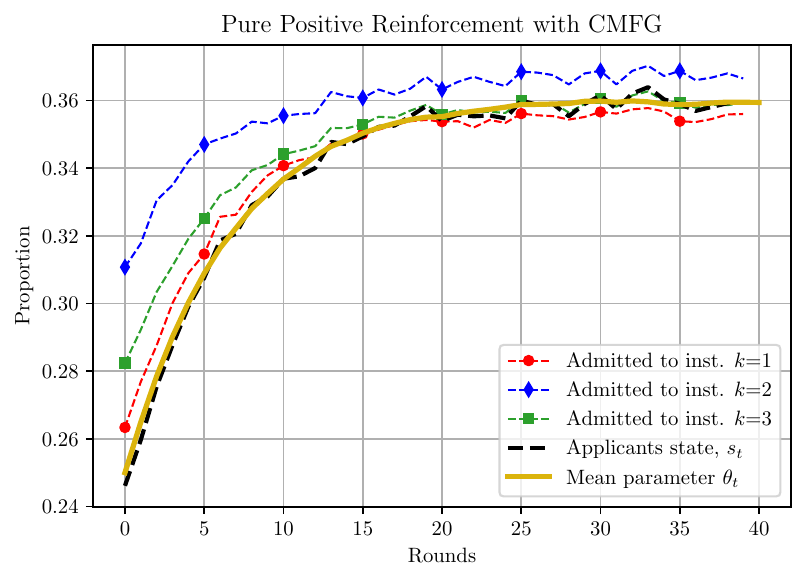}
         \caption{}
         \label{fig:5c}
     \end{subfigure}
    \caption{~\subref{fig:5a} Convergence of the mean parameter under the MFG policy with pure positive reinforcement is impacted by the fairness loss coefficient, $\lambda$, when score distributions are distinct.~\subref{fig:5b} MFG policy reaches an equilibrium with pure positive reinforcement when score distributions are different.~\subref{fig:5c} CMFG also reaches an equilibrium, albeit with different decisions.}
    \label{fig:synt_3}
\end{figure} 
We now examine the scenario where the score distributions for the groups are distinct. We assume that the minority group has a slightly lower mean and higher variance than the majority group ($\mu_0=4.9,\mu_1=5$ and $\sigma_0^2=1.1,\sigma_1^2=1$). 
The other parameters remain the same as in the first scenario, with a long-term fairness target of $\alpha=0.4$, capacities $c_1 = 0.1$, $c_2 = 0.05$ and $c_3 = 0.2$, an initial mean parameter $\theta_0=0.25$, and fixed step size $\eta=0.5$.

Figure~\ref{fig:5a} shows the evolution of the mean parameter under different weights, $\lambda$, for fairness loss, under non-identical distributions. We observe that by altering $\lambda$, the equilibrium point can be varied. Thus, a higher value of $\lambda$ is required to achieve the long-term fairness goal. We then show in Figures~\ref{fig:5b} and~\ref{fig:5c}, the evolution of the applicant pool, admission proportions, and their convergence under the pure positive reinforcement model, under MFG and CMFG policies respectively, with $\lambda = 1$. They both converge to similar equilibrium although the admission decisions in initial rounds are quite distinct. We observe a behavior of negative feedback under the role-model reinforcement model under distinct score distributions as well. The results showing negative feedback and the alleviation of this effect through the CMFG policy are deferred to Appendix~\ref{sec:app_dist_dists}. In the next section, we will examine cases where the score distributions are significantly different, obtained from a real-world dataset.

\subsection{Multi-agent framework evaluated on semi-synthetic dataset}
\begin{figure}[t]
     \centering
     \begin{subfigure}[t]{0.3\columnwidth}
         \centering
         \includegraphics[width=\textwidth]{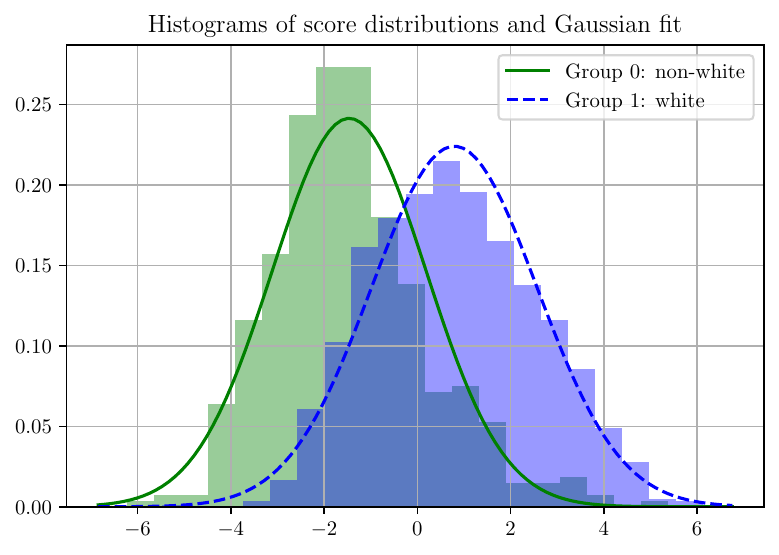}
         \caption{}
         \label{fig:lsa_hists}
     \end{subfigure}
     \hfill
     \begin{subfigure}[t]{0.3\columnwidth}
         \centering
         \includegraphics[width=\textwidth]{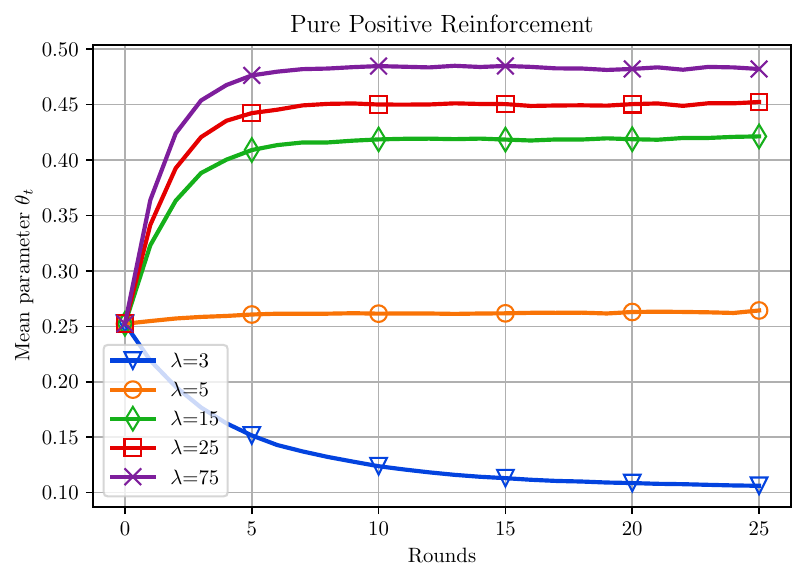}
         \caption{}
         \label{fig:lsa_pure_pos_mfg}
     \end{subfigure}
     \hfill
     \begin{subfigure}[t]{0.3\columnwidth}
         \centering
         \includegraphics[width=\textwidth]{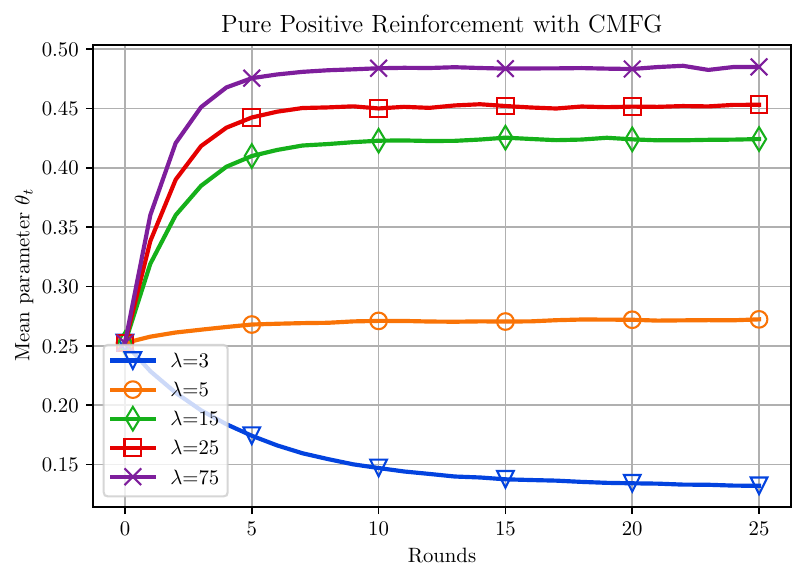}
         \caption{}
         \label{fig:lsa_pure_pos_cmfg}
     \end{subfigure}
    \caption{~\subref{fig:lsa_hists} depicts the score distributions of the white and non-white groups in the law school dataset, along with their Gaussian approximations.~\subref{fig:lsa_pure_pos_mfg},~\subref{fig:lsa_pure_pos_cmfg} show the evolution of the mean parameter, $\theta_t$, and its equilibrium with varying values of the fairness loss coefficient, $\lambda$, under the MFG and centralized MFG policies respectively under pure positive reinforcement.}
    \label{fig:law_1}
\end{figure}

In this section, we report on experiments on the law school bar study dataset~\cite{lsac_bar}. The dataset is used to construct the initial setup of the experiments as defined in the previous section, therefore we refer it to as a semi-synthetic dataset. This dataset contains information collected by the Law School Admission Council from law schools in the US, including information on whether an applicant passed the bar exam based on features such as LSAT scores, undergraduate GPA, law school GPA, race, gender, family income, age, and others. We take race as the protected attribute and simplify it to a binary classification problem by grouping all races except "white" into the "non-white" category, serving as the minority group (group $0$) with only $25\%$ representation in the dataset. The dataset used in our experiments contains around 1800 instances and we followed the same pre-processing steps at~\cite{kearns2018preventing}. Next, we follow the procedure outlined in~\cite{puranik2022dynamic} to obtain the score distributions for each group. After pre-processing the data, we fit a logistic regression model to approximate the score distributions as Gaussians. This gives us the means $\mu_0=-1.46$ and $\mu_1=0.79$, and the variances $\sigma_0^2=2.73$ and $\sigma_1^2=3.16$, as seen in Figure~\ref{fig:lsa_hists}. The long-term fairness target is set to $0.5$, with three institutions having capacities of $c_1 = 0.15$, $c_2 =0.10$ and $c_3 =0.05$. The step size, $\eta$, is fixed at 0.5. Our first examination is of the pure positive reinforcement model. Figures~\ref{fig:lsa_pure_pos_mfg} and~\ref{fig:lsa_pure_pos_cmfg} depict the mean parameter, $\theta_t$ at equilibrium for the MFG and CMFG policies, respectively, with varying fairness loss coefficients, $\lambda$. A higher value of $\lambda$ is needed to achieve the long-term fairness goal for both policies, with the CMFG showing a slightly better equilibrium point at a smaller value of $\lambda$ (for instance, $\lambda = 3$). We defer the evaluation under role-model reinforcement model to Appendix~\ref{app:role_model_lsa_dataset}. 

Our experiments with distinct distributions illustrate that the principles of reinforcement and the policies developed in this paper hold even when the score distributions are non-identical.

\section{Conclusion}
This paper studies the evolution of long-term fairness in a selection setting with multiple decision-makers choosing from a common pool. We have shown that the Multi-agent Fair-Greedy (MFG) policy does succeed in achieving long-term fairness targets under the model of pure positive reinforcement. However, when we set a higher bar for successful influence via the role-model reinforcement model, the minority group may actually experience negative feedback under MFG policy, and ultimately exit the selection process. Centralized coordination among the institutions could potentially alleviate this problem, raising the question of whether we can design mechanisms for competing institutions to collaborate (without laying themselves open to charges of collusion) in order to advance long-term fairness in society at large.  

We hope that this work, despite the simplicity of the models considered, stimulates continuing discussion on the long-term societal impact of automated decision-making in a multi-agent setting, and how we can shape it. The sensitivity of our simple system to the model for evolution motivates a concerted effort to launch real-world experiments and data collection in which algorithm designers collaborate closely with social scientists and policymakers. An important complementary effort is to pursue analytical insights for more complex models that capture different aspects of the real world. For example, while our current concept of role model is based on the relative score upon admission, qualifications upon admission are often not a predictor of ultimate success; support mechanisms provided by the institution may be more important. Can we derive insights from a plausible model for such support mechanisms? Similarly, is there a qualitative difference in our conclusions if we relax the simplifying assumption of strict institutional rankings determining preferences for all applicants?

\bibliographystyle{apalike}
\bibliography{bib_data}

\newpage
\appendix
\section*{Appendices}
\section{Optimizing the score-based reward under MFG policy}
\label{sec:app_opt_greedy_action}
We provide below a detailed proof of Theorem~\ref{thm:opt_score_act} for the optimal action that maximizes the score-based reward of each institution.

\begin{proof}
Assuming the score distributions' CDF is denoted by $\mathcal{F}$. By the Lemma~\ref{lem:concave_score_reward}, the score-based reward function of the institution $k$ is concave and it is maximized when the absolute difference between the lower thresholds of both groups is minimized, $|t_{0}^{\text{k,S,low}} - t_1^{\text{k,S,low}}|$. This is equivalent to the following minimization;
\begin{equation}
    \min_{a_t^k\in \mathcal{A}^k_t} \left|\mathcal{F}^{-1}\left(1-\frac{\bar{c}_t^{0,k}+a_tc_k}{s_t}\right) - \mathcal{F}^{-1}\left(1-\frac{\bar{c}_t^{1,k}+(1-a_t)c_k}{1-s_t}\right)\right|, \forall k \in \{2,\ldots, K\}
\end{equation}
where $\bar{c}_t^{0,k} = \sum_{j=1}^{k-1}\pi_t^jc_j$ and $\bar{c}_t^{1,k} = \sum_{j=1}^{k-1}(1-\pi_t^j)c_j$. Due to the monotonicity of the inverse CDF, the optimal action that maximizes the score-based reward is equivalent to minimizing the following:
\begin{equation}
    \min_{a_t^k\in \mathcal{A}^k_t} \left|\left(1-\frac{\bar{c}_t^{0,k}+a_tc_k}{s_t}\right) - \left(1-\frac{\bar{c}_t^{1,k}+(1-a_t)c_k}{1-s_t}\right)\right|, \forall k \in \{2,\ldots, K\}
\end{equation}
This can be simplified to the following:
\begin{equation}
    \min_{a_t^k\in \mathcal{A}^k_t} \left|\frac{\bar{c}_t^{0,k}+a_tc_k}{s_t} - \frac{\bar{c}_t^{1,k}+(1-a_t)c_k}{1-s_t}\right|, \forall k \in \{2,\ldots, K\}
\end{equation}
The argument of the minimum can be multiplied by $s_t(1-s_t)$ to simplify the expression as follows:
\begin{equation}
    \min_{a_t^k\in \mathcal{A}^k_t} \left|a_tc_k(1-s_t)-(1-a_t)c_ks_t - \sum_{j=1}^{k-1}(1-\pi_t^j)c_js_t+\sum_{j=1}^{k-1}\pi_t^jc_j(1-s_t) \right|, \forall k \in \{2,\ldots, K\}
\end{equation}
The terms with the action $a_t$ and the terms without the action $a_t$ can be grouped as follows:
\begin{equation}
    \min_{a_t^k\in \mathcal{A}^k_t} \left|a_t - s_t - \frac{1}{c_k}\left(\sum_{j = 1}^{k-1}c_j (s_t - \pi^{j}_t )\right)\right| = \left[ s_t + \frac{1}{c_k}\left( \sum_{j = 1}^{k-1}c_j (s_t - \pi^{j}_t )\right)\right]_{\mathcal{A}_{t}^k}, \forall k \in \{2,\ldots, K\}
\end{equation}
Therefore, the optimal action that maximizes the score-based reward is as stated in the theorem. It can also be seen from the Lemma~\ref{lem:concave_score_reward} that the score-based reward function of the institution $1$ is maximized when the lower thresholds of both groups are equal, $t_{0}^{\text{1,S,low}} = t_1^{\text{1,S,low}}.$ This is satisfied when $a_t^1 = s_t$.
\end{proof}
\subsection{Technical Lemmas}
In this section, we show that the score-based reward function of each institution is concave and has a unique action that maximizes its reward.
\begin{lemma}\label{lem:concave_score_reward}
    Under Assumption~\ref{asm:large}, the score-based reward function, $R_{k}(s_t, a_{t})$, is concave and it is maximized when the absolute difference between the lower thresholds of both groups is minimized, $|t_{0}^{\text{k,S,low}} - t_1^{\text{k,S,low}}|$. This is equivalent to the following minimizing;
    \begin{equation}
        \min_{a_t^k\in \mathcal{A}^k_t} \left|\mathcal{F}_0^{-1}\left(1-\frac{\bar{c}_t^{0,k}+a_tc_k}{s_t}\right) - \mathcal{F}_1^{-1}\left(1-\frac{\bar{c}_t^{1,k}+(1-a_t)c_k}{1-s_t}\right)\right|, \forall k \in \{2,\ldots, K\}
    \end{equation}
    where $\bar{c}_t^{0,k} = \sum_{j=1}^{k-1}\pi_t^jc_j$ and $\bar{c}_t^{1,k} = \sum_{j=1}^{k-1}(1-\pi_t^j)c_j$ and for $k=1$, the action that maximizes the score-based reward is equivalent to minimizing $\left|\mathcal{F}_0^{-1}\left(1-\frac{a_tc_1}{s_t}\right) - \mathcal{F}_1^{-1}\left(1-\frac{(1-a_t)c_1}{1-s_t}\right)\right|$.
\end{lemma}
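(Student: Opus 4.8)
The plan is to prove Lemma~\ref{lem:concave_score_reward} by establishing concavity of $R_k(s_t, a_t^k)$ as a function of the action, and then characterizing its maximizer via the first-order condition. First I would invoke Assumption~\ref{asm:large} to replace the finite sums of order statistics in~\eqref{eqn:score_reward} with integrals against the score distributions. Under the large-$N$ regime, selecting the top-scoring applicants from group $g$ corresponds to thresholding: if institution $k$ admits a fraction of group-$g$ applicants (net of what higher-ranked institutions took), the applicants it receives are exactly those whose scores lie in a band $[t_g^{\text{k,S,low}}, t_g^{\text{k,S,up}}]$, where the upper threshold is pinned by the lower threshold of institution $k-1$. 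The key translation is that the quantile mass above $t_g^{\text{k,S,low}}$ equals $(\bar{c}_t^{g,k} + \text{(own admissions from }g))/(\text{pool proportion of }g)$, which yields $t_0^{\text{k,S,low}} = \mathcal{F}_0^{-1}\bigl(1 - (\bar{c}_t^{0,k} + a_t c_k)/s_t\bigr)$ and the analogous expression for group~$1$ with $(1-a_t)c_k$ and $1-s_t$.

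Next I would differentiate the reward with respect to $a_t$. Increasing $a_t$ by an infinitesimal amount admits one more unit of group-$0$ mass at its current lower threshold $t_0^{\text{k,S,low}}$ while releasing one unit of group-$1$ mass at $t_1^{\text{k,S,low}}$, so the marginal change in the (un-normalized) summed score is proportional to $t_0^{\text{k,S,low}} - t_1^{\text{k,S,low}}$. This immediately gives both halves of the claim: the derivative vanishes precisely when the two lower thresholds coincide, and since each threshold moves monotonically in $a_t$ (raising $a_t$ lowers $t_0^{\text{k,S,low}}$ by admitting further down the group-$0$ list and raises $t_1^{\text{k,S,low}}$ by admitting fewer from group~$1$), the gap $t_0^{\text{k,S,low}} - t_1^{\text{k,S,low}}$ is monotone decreasing in $a_t$. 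Hence the marginal return is positive then negative, establishing concavity of $R_k$ and showing that the optimal interior action equalizes the thresholds, i.e.\ minimizes $|t_0^{\text{k,S,low}} - t_1^{\text{k,S,low}}|$. When full equalization is infeasible within $\mathcal{A}_t^k$, concavity forces the optimum to the feasible boundary point closest to equalization, which is exactly the projected minimization stated in the lemma.

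For the $k=1$ case I would specialize $\bar{c}_t^{0,1} = \bar{c}_t^{1,1} = 0$ and observe there is no upper threshold (institution~$1$ selects first), so the same thresholding argument gives the stated minimization over $|\mathcal{F}_0^{-1}(1 - a_t c_1/s_t) - \mathcal{F}_1^{-1}(1 - (1-a_t)c_1/(1-s_t))|$.

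The main obstacle I anticipate is making the marginal-analysis rigorous at the level of the expectation of a sum of order statistics: the heuristic that ``admitting one more unit contributes its threshold score'' needs the law-of-large-numbers replacement to be justified carefully, including showing that the expected summed score is differentiable in $a_t$ with derivative equal to the marginal threshold value. A secondary subtlety is the boundary/feasibility handling, since $\mathcal{A}_t^k$ may prevent threshold equalization; here the argument must lean on concavity to conclude that the constrained optimum is the projection, rather than assuming an interior stationary point exists.
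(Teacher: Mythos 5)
Your proposal is correct and follows essentially the same route as the paper's proof: both replace the order-statistic sums by conditional expectations over threshold bands under Assumption~\ref{asm:large}, identify the derivative of the reward with the lower-threshold gap $t_0^{\text{k,S,low}} - t_1^{\text{k,S,low}}$, and deduce concavity from the fact that this gap is monotone decreasing in $a_t$ (the paper makes this explicit by computing the second derivative in terms of the group PDFs). The subtleties you flag — justifying the marginal-score heuristic via the large-$N$ limit and using concavity to handle infeasibility of threshold equalization — are exactly the points the paper's proof addresses.
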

\begin{proof}
Dropping the superscript $k$ in the notation of the action for brevity, the score-based reward of the $k^{th}$ institution is:
\begin{equation}
    R_{k}(s_t, a_{t}) = a_t \frac{\mathbb{E}\left[ \sum_{i = {m}^{0}_{k,t}+1}^{{m}^{0}_{k,t}+ a_{t} A_{k,t}} X^0_{(i)}\right]}{a_t A_{k,t}} + (1-a_t) \frac{\mathbb{E} \left[\sum_{i = {m}^{1}_{k,t}+ 1}^{{m}^{1}_{k,t}+(1 - a_{t})A_{k,t}} X^1_{(i)} \right]}{(1-a_t)  A_{k,t}}
\end{equation}
The number of admitted applicants is approximated as $A_{k,t}^0 = \lfloor a_t A_{k,t}\rfloor \approx a_t A_{k,t}$, as we are considering that the number of applicants is large. Under this regime, the collection of scores is in accordance with their respective group-specific score distributions $\mathcal{P}^g$. Following the idea in~\cite{puranik2022dynamic}, the average scores of selected applicants from each group can be represented by the following conditional expectations:
\begin{equation}
    \lim_{N \rightarrow \infty} \frac{\sum_{i = {m}^{0}_{k,t}+1}^{{m}^{0}_{k,t}+ a_{t} A_{k,t}} X^0_{(i)}}{a_t A_{k,t}} = \mathbb{E}\left[X^0 \mid t_0^{\text{k,S,low}} \leq X^0 \leq t_0^{\text{k,S,up}}\right]
\end{equation}
\begin{equation}
    \lim_{N \rightarrow \infty} \frac{\sum_{i = {m}^{1}_{k,t}+1}^{{m}^{1}_{k,t}+ (1-a_{t}) A_{k,t}} X^1_{(i)}}{(1-a_t) A_{k,t}} = \mathbb{E}\left[X^1 \mid t_1^{\text{k,S,low}} \leq X^1 \leq t_1^{\text{k,S,up}}\right]
\end{equation}
Then, the score-based reward function of the institution $k$ can be written as follows:
\begin{equation}\label{eq:greedy_reward_multi}
    R_g^k(s_t,a_t) = a_t\mathbb{E}\left[X^0 \mid t_0^{\text{k,S,low}} \leq X^0 \leq t_0^{\text{k,S,up}}\right]+(1-a_t)\mathbb{E}\left[X^1 \mid t_1^{\text{k,S,low}} \leq X^1 \leq t_1^{\text{k,S,up}}\right]
\end{equation}
We denote the cumulative distribution function (CDF) of the score-distributions by $\mathcal{F}_0$ and $\mathcal{F}_1$. The tail of the distributions beyond the upper thresholds represents the proportion of the applicants from the particular group who have been already admitted by the better-ranked institutions.
\begin{align}
    1 - \mathcal{F}_0(t_0^{\text{k,S,up}}) &= \frac{\sum_{j = 1}^{k-1} c_j  \pi^{j}_{t}}{s_t} \implies t_0^{\text{k,S,up}} = \mathcal{F}_0^{-1}\left(1-\frac{\bar{c}_t^{0,k}}{s_t}\right)\label{eq:threshold_upper_1_multi}\\
    1 - \mathcal{F}_1(t_1^{\text{k,S,up}}) &= \frac{\sum_{j = 1}^{k-1} c_j  (1-\pi^{j}_{t})}{(1-s_t) }\implies t_1^{\text{k,S,up}} = \mathcal{F}_1^{-1}\left(1-\frac{\bar{c}_t^{1,k}}{1-s_t}\right)\label{eq:threshold_upper_2_multi}
\end{align}
Moreover, the area between the lower and upper thresholds in the distributions signifies the proportion of applicants admitted by the specific group from the total number of applicants belonging to that group at the $k^{th}$ institution. Thus, we have:
\begin{align}\label{eq:threshold_diff_multi}
    \mathcal{F}_0(t_0^{\text{k,S,up}}) - \mathcal{F}_0(t_0^{\text{k,S,low}}) = \frac{a_tc_k}{s_t}  \quad \text{and} \quad
    \mathcal{F}_1(t_1^{\text{k,S,up}}) - \mathcal{F}_1(t_1^{\text{k,S,low}}) = \frac{(1-a_t)c_k}{1-s_t} 
\end{align}
Then using the upper thresholds, we can write the lower thresholds as follows:
\begin{align}
    1 - \mathcal{F}_0(t_0^{\text{k,S,low}}) = \frac{\bar{c}_t^{0,k}}{s_t}+\frac{a_tc_k}{s_t}  \quad \text{and} \quad
    1 - \mathcal{F}_1(t_1^{\text{k,S,low}}) = \frac{\bar{c}_t^{1,k}}{1-s_t}+\frac{(1-a_t)c_k}{1-s_t} 
\end{align}
Then, we can write the lower thresholds as follows:
\begin{align}\label{eq:threshold_def_multi}
    t_0^{\text{k,S,low}} = \mathcal{F}_0^{-1}\left(1-\frac{\bar{c}_t^{0,k}+a_tc_k}{s_t}\right)   \quad \text{and} \quad
    t_1^{\text{k,S,low}} = \mathcal{F}_1^{-1}\left(1-\frac{\bar{c}_t^{1,k}+(1-a_t)c_k}{1-s_t}\right)
\end{align}
We can write the score reward function of the institution $k$ as follows:
\begin{equation}
    R_g^k(s_t,a_t) = a_t\int_{t_0^{\text{k,S,low}}}^{t_0^{\text{k,S,up}}}x_0\frac{f_0(x_0)}{\int_{t_0^{\text{k,S,low}}}^{t_0^{\text{k,S,up}}}f_0(x_0)\mathrm{d}x_0}\mathrm{d}x_0+(1-a_t)\int_{t_1^{\text{k,S,low}}}^{t_1^{\text{k,S,up}}}x_1\frac{f_1(x_1)}{\int_{t_1^{\text{k,S,low}}}^{t_1^{\text{k,S,up}}}f_1(x_1)\mathrm{d}x_1}\mathrm{d}x_1
\end{equation}
We can move the common denominator out of the integral since it is a constant:
\begin{equation}
    R_g^k(s_t,a_t) = \frac{a_t}{\int_{t_0^{\text{k,S,low}}}^{t_0^{\text{k,S,up}}}f_0(x_0)\mathrm{d}x_0}\int_{t_0^{\text{k,S,low}}}^{t_0^{\text{k,S,up}}}x_0f_0(x_0)\mathrm{d}x_0+\frac{(1-a_t)}{\int_{t_1^{\text{k,S,low}}}^{t_1^{\text{k,S,up}}}f_1(x_1)\mathrm{d}x_1}\int_{t_1^{\text{k,S,low}}}^{t_1^{\text{k,S,up}}}x_1f_1(x_1)\mathrm{d}x_1
\end{equation}
Then, the denominator can be written in terms of the CDF as follows:
\begin{align}
    \scalebox{0.95}{$
\begin{aligned}
    R_g^k(s_t,a_t) = \frac{a_t}{\mathcal{F}_0(t_0^{\text{k,S,up}})-\mathcal{F}_0(t_0^{\text{k,S,low}})}\int_{t_0^{\text{k,S,low}}}^{t_0^{\text{k,S,up}}}x_0f_0(x_0)\mathrm{d}x_0+\frac{(1-a_t)}{\mathcal{F}_1(t_1^{\text{k,S,up}})-\mathcal{F}_1(t_1^{\text{k,S,low}})}\int_{t_1^{\text{k,S,low}}}^{t_1^{\text{k,S,up}}}x_1f_1(x_1)\mathrm{d}x_1
\end{aligned}$}
\end{align}
Using the equations~\eqref{eq:threshold_diff_multi}, we can write the score-based reward function of the institution $k$ as follows:
\begin{equation}\label{eq:greedy_reward_multi_2}
    R_g^k(s_t,a_t) = \frac{s_t}{c_k}\int_{t_0^{\text{k,S,low}}}^{t_0^{\text{k,S,up}}}x_0f_0(x_0)\mathrm{d}x_0+\frac{(1-s_t)}{c_k}\int_{t_1^{\text{k,S,low}}}^{t_1^{\text{k,S,up}}}x_1f_1(x_1)\mathrm{d}x_1
\end{equation}
The upper thresholds are not dependent on the current institution's action.
Then, we can use the definitions of lower thresholds from equations~\eqref{eq:threshold_def_multi} to write the score-based reward function of the institution $k$ as follows:
\begin{equation}
    R_g^k(s_t,a_t) = \frac{s_t}{c_k}\int_{\mathcal{F}_0^{-1}\left(1-\frac{\bar{c}_t^{0,k}+a_tc_k}{s_t}\right)}^{t_0^{\text{k,S,up}}}x_0f_0(x_0)\mathrm{d}x_0+\frac{(1-s_t)}{c_k}\int_{\mathcal{F}_1^{-1}\left(1-\frac{\bar{c}_t^{1,k}+(1-a_t)c_k}{1-s_t}\right)}^{t_1^{\text{k,S,up}}}x_1f_1(x_1)\mathrm{d}x_1
\end{equation}
Using the fundamental theorem of calculus, we can write the derivative of the score-based reward function of the institution $k$ as follows:
\begin{equation}\label{eq:greedy_reward_multi_derivative}
    \diff{R_g^k(s_t,a_t)}{{a_t}} = \mathcal{F}_0^{-1}\left(1-\frac{\bar{c}_t^{0,k}+a_tc_k}{s_t}\right) - \mathcal{F}_1^{-1}\left(1-\frac{\bar{c}_t^{1,k}+(1-a_t)c_k}{1-s_t}\right) = t_0^{\text{k,S,low}} - t_1^{\text{k,S,low}}
\end{equation}
Then, we can write the second derivative as follows:
\begin{equation}\label{eq:greedy_reward_multi_second_derivative}
    \frac{\mathrm{d}^2R_g^k(s_t,a_t)}{{\mathrm{d}a_t^{k}}^2} = -\frac{c_k}{s_t}\frac{1}{f_{0}\left(\mathcal{F}_0^{-1}\left(1-\frac{\bar{c}_t^{0,k}+a_tc_k}{s_t}\right)\right)}-\frac{c_k}{(1-s_t)}\frac{1}{f_{1}\left(\mathcal{F}_1^{-1}\left(1-\frac{\bar{c}_t^{1,k}+(1-a_t)c_k}{1-s_t}\right)\right)}
\end{equation}
Since the PDF functions are non-negative, $c_k \in (0,1]$, and $s_t \in (0,1)$, the second derivative is non-positive. This means the score-based reward function of the institution $k$ is concave.
\newline
The score-based reward function of the institution $k$ is concave and its first derivative is monotone and defined on the interval $[\max(0, 1-\frac{1-s_t-\bar{c}_t^{1,k}}{c_k}), \min(1, \frac{s_t-\bar{c}_t^{0,k}}{c_k})]$. Then, the score-based reward function of the institution $k$ is maximized when the absolute difference between the lower thresholds of both groups is minimized, $|t_{0}^{\text{k,S,low}} - t_1^{\text{k,S,low}}|$. This is equivalent to the following minimization:
\begin{equation}\label{eqn:low_thresh}
    \min_{a_t^k\in \mathcal{A}^k_t} \left|\mathcal{F}_0^{-1}\left(1-\frac{\bar{c}_t^{0,k}+a_tc_k}{s_t}\right) - \mathcal{F}_1^{-1}\left(1-\frac{\bar{c}_t^{1,k}+(1-a_t)c_k}{1-s_t}\right)\right|
\end{equation}
where $\bar{c}_t^{0,k} = \sum_{j=1}^{k-1}\pi_t^jc_j$ and $\bar{c}_t^{1,k} = \sum_{j=1}^{k-1}(1-\pi_t^j)c_j$. For $k=1$, the action that maximizes the score-based reward is equivalent to minimizing $\left|\mathcal{F}_0^{-1}\left(1-\frac{a_tc_1}{s_t}\right) - \mathcal{F}_1^{-1}\left(1-\frac{(1-a_t)c_1}{1-s_t}\right)\right|$.
\end{proof}

\section{Proof details for applicant pool convergence under MFG policy}
\label{sec:app_weighted_MFG_expr}
We first provide a proof of Lemma~\ref{lem:weighted_policy} below. 
\begin{proof}
Firstly, we observe that the optimal policy of institution $k$ can be expressed as a convex combination of its optimal score-based reward action and optimal fair-only action as $\pi^{k}_{t} = \omega_{k,t} a^{k}_{S,t} + (1-\omega_{k,t}) a^{k}_{F,t}$, where $\omega_{k,t}\in[0,1]$. This follows from the fact that the fairness-aware utility function is a concave function of the action, and the score-based reward function and the fairness loss functions are monotone. Then, we claim that the optimal policy of institution $k$ can be expressed as a convex combination of its pre-projection optimal score-based reward action and pre-projection optimal fairness-aware action (before being projected to the feasible action space $\mathcal{A}^k_t$), as $\pi^{k}_{t} = \gamma_{k,t} \left(s_t + \frac{1}{c_k}\left( \sum_{j = 1}^{k-1}c_j (s_t - \pi^{j}_t )\right)\right) + (1-\gamma_{k,t}) \alpha$, where $\gamma_{k,t}\in[0,1]$.

Let $\bar{a}^{k}_{S,t}=\left(s_t + \frac{1}{c_k}\left(\sum_{j = 1}^{k-1}c_j (s_t - \pi^{j}_t )\right)\right)$ and $\bar{a}^{k}_{F,t}=\alpha$, which represent the pre-projected optimal actions. If both $\bar{a}^{k}_{S,t} \in \mathcal{A}^k_t$ and $\bar{a}^{k}_{F,t} \in \mathcal{A}^k_t$, then the claim is straight-forward. Similarly, if only one of them is not in the feasible action space, then the claim is straightforward. The only case that needs to be considered is when both pre-projected optimal actions are not in the feasible action space. If one of them lies on the left of the feasible action interval and the other lies on the right of the feasible action interval, then the claim is straightforward. We will show that pre-projected optimal actions cannot simultaneously lie on the left/right of the feasible action interval. The first case is when both pre-projected optimal actions are on the left of the feasible action interval. Suppose that $\bar{a}^{k}_{F,t} <\max(0,1-\frac{1-s_t-\sum_{j=1}^{k-1}(1-\pi^{j}_t)c_j}{c_k})$ and $\bar{a}^{k}_{S,t}<\max(0, 1-\frac{1-s_t-\sum_{j=1}^{k-1}(1-\pi^{j}_t)c_j}{c_k})$. If the maximum is equal to 0, then there is a contradiction since $\bar{a}^{k}_{F,t} \in[0,1]$. Similarly, if the maximum is equal to $1-\frac{1-s_t-\sum_{j=1}^{k-1}(1-\pi^{j}_t)c_j}{c_k}$, then there is a contradiction since $\bar{a}^{k}_{S,t}> 1-\frac{1-s_t-\sum_{j=1}^{k-1}(1-\pi^{j}_t)c_j}{c_k}$, due to the condition that $\sum_{j=1}^{K}c_j<1$. Thus, pre-projected optimal actions cannot lie on the left of the feasible action interval at the same time. The second case is when both pre-projected optimal actions are on the right of the feasible action interval. Suppose that $\bar{a}^{k}_{F,t} >\min(1, \frac{s_t-\sum_{j=1}^{k-1}\pi^{j}_tc_j}{c_k})$ and $\bar{a}^{k}_{S,t}>\min(1, \frac{s_t-\sum_{j=1}^{k-1}\pi^{j}_tc_j}{c_k})$. If the minimum is equal to 1, then there is a contradiction since $\bar{a}^{k}_{F,t} \in[0,1]$. Similarly, if the minimum is equal to $\frac{s_t-\sum_{j=1}^{k-1}\pi^{j}_tc_j}{c_k}$, then there is a contradiction since $\bar{a}^{k}_{S,t}< \frac{s_t-\sum_{j=1}^{k-1}\pi^{j}_tc_j}{c_k}$, due to the condition that $\sum_{j=1}^{K}c_j<1$. Thus, the pre-projected optimal actions cannot lie on the right of the feasible action interval at the same time. Therefore, our claim holds.

By utilizing this relation, equation \eqref{eqn:opt_score_action} can be expressed as the following by iteratively writing expressions for the optimal score-based reward action in the increasing order of the institutions:
\begin{align}
\scalebox{0.90}{$
       a^{k}_{S,t} =\left[\frac{s_t}{c_k}\left(c_{k}+\displaystyle\sum_{j = 1}^{k-1}c_j {\displaystyle \prod_{i = j}^{k-1}} (1 - \gamma_{i,t} ) \right) - \frac{\alpha}{c_k}\left(\displaystyle\sum_{j = 1}^{k-1}c_j {\displaystyle \prod_{i = j}^{k-1}} (1 - \gamma_{i,t} )\right)\right]_{\mathcal{A}^k_t}
       =\left[s_t + \frac{(s_t-\alpha)}{c_k}\left(\displaystyle\sum_{j = 1}^{k-1}c_j {\displaystyle \prod_{i = j}^{k-1}} (1 - \gamma_{i,t} )\right)\right]_{\mathcal{A}^k_t}
       \label{eqn:opt_score_action_v2}
       $}
\end{align}
We use the above expression and now prove the lemma. The optimal action of the institution $k$ can be expressed as follows:
\begin{equation}
    \pi^{k}_t = \gamma_{k,t} \left(s_t + \frac{(s_t-\alpha)}{c_{k}}\left(\sum_{j = 1}^{k-1}c_j {\displaystyle \prod_{i = j}^{k-1}} (1 - \gamma_{i,t} )\right)  \right)      + (1-\gamma_{k,t}) \alpha 
\end{equation}
We can compute the weighted action as follows;
\begin{equation}
    c_{k}\pi^{k}_t = c_{k} \gamma_{k,t} s_t + \gamma_{k,t}(s_t-\alpha)\sum_{j = 1}^{k-1}c_j {\displaystyle \prod_{i = j}^{k-1}} (1 - \gamma_{i,t} )
    + c_{k}(1-\gamma_{k,t}) \alpha
\end{equation}
We can add and subtract $c_{k}s_t $;
\begin{equation}
    c_{k}\pi^{k}_t = c_{k}s_t  
    + c_{k}(1-\gamma_{k,t})(\alpha-s_t) - \gamma_{k,t}(\alpha-s_t)\sum_{j = 1}^{k-1}c_j {\displaystyle \prod_{i = j}^{k-1}} (1 - \gamma_{i,t} )
    \label{eqn:lem_opt_k}
\end{equation}
We can add and subtract $(\alpha-s_t)\displaystyle\sum_{j = 1}^{k-1}c_j {\displaystyle \prod_{i = j}^{k-1}} (1 - \gamma_{i,t} )$;
\begin{equation}
    c_{k}\pi^{k}_t = c_{k}s_t  
    + c_{k}(1-\gamma_{k,t})(\alpha-s_t) + (1-\gamma_{k,t})(\alpha-s_t)\sum_{j = 1}^{k-1}c_j {\displaystyle \prod_{i = j}^{k-1}} (1 - \gamma_{i,t} ) - (\alpha-s_t)\displaystyle\sum_{j = 1}^{k-1}c_j {\displaystyle \prod_{i = j}^{k-1}} (1 - \gamma_{i,t} )
\end{equation}
It can be grouped as follows:
\begin{equation}
    c_{k}\pi^{k}_t = c_{k}s_t  
    + (\alpha-s_t)\sum_{j = 1}^{k}c_j {\displaystyle \prod_{i = j}^{k}} (1 - \gamma_{i,t} ) - (\alpha-s_t)\displaystyle\sum_{j = 1}^{k-1}c_j {\displaystyle \prod_{i = j}^{k-1}} (1 - \gamma_{i,t} )
\end{equation}
It can be seen that the second and third terms are telescoping, and the summation of the weighted actions of all institutions can be written as follows:
\begin{equation}
    \displaystyle \sum_{k = 1}^{K} c_k \pi^{k}_t = s_t\displaystyle \sum_{k = 1}^{K} c_k +  (\alpha-s_t) \displaystyle\sum_{k = 1}^{K} c_k \displaystyle \prod_{i = k}^{K} (1 - \gamma_{i,t} )
\end{equation}
After dividing by the total capacity, we obtain the desired expression.
\end{proof}

We will now utilize the result of the above lemma in showing the proof of Theorem~\ref{thm:conv_mfg}, where we state that the applicant pool proportion, and the admission proportions of the institutions converge to the long-term fairness target set by the agents, under the MFG policy with pure positive reinforcement model.
\begin{proof}
Suppose that the state of the MDP equals the fairness target $\alpha$. It follows that the top institution maximizes its utility by selecting $\alpha$ proportion of group $0$ among all its admitted applicants $\pi_t^{1} = \alpha$, as its optimal score-based reward action itself is $a^{1}_{S,t} = s_t = \alpha$. Further, $a^{2}_{S,t} = \alpha$ by equation~\eqref{eqn:opt_score_action}, which implies that $\pi_t^{2} = \alpha$. It can be seen that $\pi_t^{k} = \alpha, \forall k$, due to which the weighted average $\pi^W_t = \alpha$, i.e., it is a fixed-point of the weighted policy. 
We will show the uniqueness of the fixed point by showing $\gamma_{i,t}\neq1, \forall i \in [K]$. Assume that $s_t<\alpha$. Then, the optimal score-based reward action of the top institution is $a^{1}_{S,t} = s_t$, since the top institution maximizes its score-based reward when it sets same threshold for both groups. The optimal fair-only action is $a^{1}_{F,t} = [\alpha]_{\mathcal{A}^1_t}$. Since $a^{1}_{F,t}>s_t$ the MFG policy for the top institution will accept more from the minority group than the applicant proportion due to the structure of the weighted MFG policy. Then, $\pi^{1}_{t}>s_t$ and $\gamma_{1,t}\neq1$. The score-based reward of the second institution is maximized when it sets the same threshold for both groups, this implies that $a^{2}_{S,t} < s_t$ because the first institution admitted more from the minority group than the applicant proportion. The optimal fair-only action is $a^{2}_{F,t} = [\alpha]_{\mathcal{A}^2_t}$. Then, $\pi^{2}_{t}>s_t$ and $\gamma_{2,t}\neq1$. The same argument can be made for the subsequent institutions, since the optimal score-based reward action of the institution $k$ is $a^{k}_{S,t} < s_t$ because the previous institutions admitted more from the minority group than the applicant proportion. The optimal fair-only action is $a^{k}_{F,t} = [\alpha]_{\mathcal{A}^k_t}$. Then, $\pi^{k}_{t}>s_t$ and $\gamma_{k,t}\neq1$. The same argument can be used when $s_t>\alpha$.

Next, since from the update for pure positive reinforcement \eqref{eqn:pure_pos_reinf}, it follows that the mean parameter $\theta_{t}$ drifts towards the fairness target $\alpha$, irrespective of the state, due to the structure of the weighted MFG policy. 
In addition, since $\alpha$ is a unique fixed-point, as the step size is decaying with time, it can be shown that the mean parameter converges to the fairness target. Let $d_t=\frac{1}{2}(\theta_t-\alpha)^2$. Fix an $\epsilon>0$. Then, we need to show that there exists some $t_0(\epsilon)$ such that for all $t>t_0(\epsilon)$,
\begin{align}
    d_{t+1} &\leq d_t - \zeta_t, \text{ if } d_t\geq\epsilon \label{eq:thm21} \\ 
    d_{t+1} &\leq c\epsilon \text{ if } d_t<\epsilon \label{eq:thm22}
\end{align}
where $c$ is a positive constant. Moreover $\zeta_t>0$ and $\sum_{t=0}^{\infty}\zeta_t=\infty$. If the above conditions are satisfied, then eventually for some $t_1(\epsilon)\geq t_0(\epsilon)$, we have $d_t<\epsilon$. But due to equations~\eqref{eq:thm21} and~\eqref{eq:thm22}, we have $d_{t+1}\leq c\epsilon$ for all $t>t_1(\epsilon)$. Since $\epsilon$ is arbitrary, we have $\theta_t\rightarrow\alpha$ as $t\rightarrow\infty$.
\begin{align}
    d_{t+1} &= \frac{1}{2}(\theta_{t+1}-\alpha)^2 \nonumber \\
    &= \frac{1}{2}(\left[ \theta_t + \eta_t (\pi^{W}_{t} - s_t) \right]_{\mathcal{C}}-\alpha)^2 \nonumber \\
    &\leq \frac{1}{2}(\theta_t + \eta_t (\pi^{W}_{t} - s_t)-\alpha)^2 \nonumber \\
    &= d_t + \eta_t(\theta_t-\alpha)(\pi^{W}_{t}-s_t) + \frac{1}{2}\eta_t^2(\pi^{W}_{t} - s_t)^2 \nonumber \\
    &\leq d_t + \eta_t(\theta_t-\alpha)(\pi^{W}_{t}-s_t) + \frac{1}{2}\eta_t^2\label{eq:thm23a} \\
    &\leq d_t +\frac{\eta_t}{2}((\theta_t-\alpha)^2+1) + \frac{1}{2}\eta_t^2 \label{eq:thm23}
\end{align}
Since $\eta_t\rightarrow0$, if $d_t<\epsilon$, then $d_{t+1}<c\epsilon$ for some $c>0$.

When $d_t\geq\epsilon$, first we will account for the stochasticity of $s_t$. We have $\pi^{W}_{t}-s_t = \pi^{W}_{t}-\theta_t+(\theta_t-s_t)$. Denoting $z_t=\theta_t-s_t$, using equation~\eqref{eq:thm23a}, we have
\begin{equation}
    d_{t+1}\leq d_t + \eta_t(\theta_t-\alpha)(\pi^{W}_{t}-\theta_t+z_t) + \frac{1}{2}\eta_t^2\nonumber
\end{equation}
where $z_t$ is a zero-mean random variable. Also $\mathbb{E}[z_t^2]=var(s_t)<\infty$, which is bounded. Therefore, $\nu_t:=\sum_{i=0}^{t}\eta_tz_i$ is a martingale and $\mathbb{E}[\nu_t^2]$ is also bounded. This implies by the martingale convergence theorem that $\nu_t$ converges to a finite random variable. Therefore, we have $\sum_{i=t}^{\infty}\eta_tz_i\rightarrow0$. Since, $|\theta_t-\alpha|$ is bounded, the effect of the noise $z_t$ is negligible. Therefore we have
\begin{equation}
    d_{t+1}\leq d_t + \eta_t(\theta_t-\alpha)(\pi^{W}_{t}-\theta_t) + \frac{1}{2}\eta_t^2\nonumber
\end{equation}
We want to show that 
\begin{equation}
    (\theta_t-\alpha)(\pi^{W}_{t}-\theta_t)\leq-\delta(\epsilon) \label{eq:thm24}
\end{equation}
for some $\delta(\epsilon)>0$. If this holds, we have
\begin{equation}
    d_{t+1} \leq d_t - \eta_t\delta(\epsilon) + \frac{\eta_t^2}{2} \label{eq:thm25}
\end{equation}
Let us denote $\zeta_t=\eta_t\delta(\epsilon) - \frac{\eta_t^2}{2}$. Since, $\eta_t\rightarrow0$, there exists $t_2(\epsilon)$ such that $\zeta_t>0$ for all $t>t_2(\epsilon)$. Moreover, due to assumptions on the step size, we have $\sum_{t=0}^{\infty}\zeta_t=\infty$.

What remains to show is equation~\eqref{eq:thm24}. In the regime where the number of applicants $N$ is large, we can see that the state $s_t$ is equal to its mean $\theta_t$ with probability approaching $1$ through Chebyshev inequality. When $d_t\geq\epsilon$, since $s_t$ is equal to its mean $\theta_t$, we need to consider only cases (i) $s_t<\alpha$ and (ii) $s_t>\alpha$. Under both cases, we have $(\theta_t-\alpha)(\pi^{W}_{t}-\theta_t)<0$ due to the structure of the weighted MFG policy when the score distributions are identical.
\end{proof}

\section{Negative feedback in role model reinforcement}
\label{sec:app_role_model_neg_fb}
Here, we provide a proof for Proposition~\ref{prop:role_model}, and show that MFG policy can cause negative feedback under role model reinforcement and drive the under-represented group out of the system.
\begin{proof}
We assume that the MFG policy assigns non-zero weight to the long-term fairness objective, i.e., $\lambda> 0$. Since we are interested in the regime where the number of applicants $N_t$ is large, we assume that the histograms of the scores of applicants approach the distribution. Without loss of generality, we denote group $0$ to be the under-represented group, and assume that the initial state is less than the fairness target, $s_t < \alpha$. However, the same procedure applies when group $0$ has a higher proportion than the long-term fairness target. We will also assume that the initial mean parameter $\theta_t$ is small. 

We recall that for the top institution, the action maximizing its score-based reward  $a^{1}_{S,t} = s_t$, the current state, by Theorem~\ref{thm:opt_score_act} and the action minimizing its fairness loss $a^{1}_{F,t} = [\alpha]_{\mathcal{A}^1_t}$, the long-term fairness target. We also know the MFG policy for the top institution will be the convex combination $\pi^{1}_{t} = \gamma_1 s_t + (1-\gamma_k)\alpha$. Then, $\pi^{1}_{t}>s_t$ because $\alpha>s_t$ by our assumption. Viewing the first institution's MFG policy as applying group-dependent lower and upper thresholds on the score distributions as in the proof of Theorem~\ref{thm:opt_score_act}, and admitting all applicants between the group-specific thresholds, we can infer that the lower threshold for group 0 (denoted by $t_0^{\text{1, low}}$) is strictly less than the lower threshold for group 1 (denoted by $t_1^{\text{1, low}}$), due to the fact that $\pi^{1}$ admits more from the minority group than the applicant proportion. Implicitly, the upper thresholds for the first institution are infinity. Thus we have:
\begin{equation}
    t_{0}^{\text{1, low}} < t_1^{\text{1, low}}
\end{equation}
Note that these thresholds are different from the thresholds optimizing only for the score-based rewards, which have also been described by the same notation in the proof of Theorem~\ref{thm:opt_score_act}. 

Let us assume that the fraction of the role models $r = \epsilon$. The evolution of the applicant pool is such that only those admitted applicants with scores larger than a certain threshold determined by parameter $r$ will contribute to reinforcing the pool. Let us denote this threshold for institution $k$ as $t^{k}_{r}$. As the parameter $r$ decreases, the threshold $t^{k}_{r}$ increases. It follows that there exists $r$ small enough, such that the role model threshold for the first institution is $t^{1}_{r}\geq t_1^{\text{1, low}}$. We remark that the role model threshold is independent of the group membership.

Now, note that $r^1_t$ is equivalent to the ratio of the number of group $0$ applicants with scores higher than $t^{1}_{r}$, to the number of applicants with scores larger than $t^{1}_{r}$. Thus we can write an expression for $r^1_t$ in terms of the CDF of the score distribution, denoted by $\mathcal{F}$, as
\begin{equation}
    r_t^1=\frac{s_t\left(1-\mathcal{F}(t_r^{1})\right)}{s_t\left(1-\mathcal{F}(t_r^{1})\right)+(1-s_t)\left(1-\mathcal{F}(t_r^{1})\right)}=s_t.
    \label{eqn:role1_state}
\end{equation}

For the subsequent institutions, it is known from \eqref{eqn:opt_score_action_v2} that  for all $k\geq 2$, if $s_t < \alpha$, the action optimizing the score-based reward is $a^{k}_{S,t} < s_t$.
Therefore, the policy for the $k^{th}$ institution ($k\geq 2$) is
\begin{equation}
\pi^{k}_{t} = (1-\gamma_{k,t}) \alpha+\gamma_{k,t} \left(s_t + \frac{1}{c_k}\left( \sum_{j = 1}^{k-1}c_j (s_t - \pi^{j}_t )\right)\right) > a^{k}_{S,t}\label{eqn:pol_more_than_a_star}
\end{equation}
since we have $\alpha>s_t>a^{k}_{S,t}$. Note that for the MFG policies of the institutions, the lower threshold of institution $k-1$ will be equal to the upper threshold of institution $k$, as it would admit all applicants with scores in between the lower and upper thresholds. 

If the institutions with $k\geq 2$ were optimizing only the score-based reward their lower thresholds would be equal across the groups, as argued in the equation \eqref{eqn:low_thresh}. But here, since the MFG policy admits a higher proportion from group $0$, i.e., \eqref{eqn:pol_more_than_a_star}, the thresholds are such that the lower threshold of group 0 is always less than the lower threshold of group 1, $\forall k \in [K]$
\begin{equation}
    \label{eqn:low_k}
    t_{0}^{\text{k,low}} < t_1^{\text{k,low}}.
\end{equation}
Furthermore, if $r$ is small enough, the role model thresholds for all subsequent institutions are large enough to satisfy $t_1^{\text{k,low}} \leq t_r^{k}.$
Then, we can express the proportion of the role models from group 0 for the institution $k$ as
\begin{equation}
    r_t^k=\frac{s_t\left(\mathcal{F}(t_0^{\text{k,up}})-\mathcal{F}(t_r^{k})\right)}{s_t\left(\mathcal{F}(t_0^{\text{k,up}})-\mathcal{F}(t_r^{k})\right)+(1-s_t)\left(\mathcal{F}(t_1^{\text{k,up}})-\mathcal{F}(t_r^{k})\right)}.
\end{equation}
Using $t_1^{\text{k,up}}>t_0^{\text{k,up}}$, we can upper bound $r_t^k$ as 
\begin{align}    &r_t^k<\frac{s_t\left(\mathcal{F}(t_0^{\text{k,up}})-\mathcal{F}(t_r^{k})\right)}{s_t\left(\mathcal{F}(t_0^{\text{k,up}})-\mathcal{F}(t_r^{k})\right)+(1-s_t)\left(\mathcal{F}(t_0^{\text{k,up}})-\mathcal{F}(t_r^{k})\right)}\\ &\implies r_t^k<s_t    \label{eqn:role_less_than_state} 
\end{align}
for all $k\in\{2,3,\dots,K\}$ and $r_t^1=s_t$.

Under the role model reinforcement, group $0$'s applicant proportion receives a positive drift only if the weighted parameter $\pi^r_t$ in equation \eqref{eqn:role_pol} is larger than $s_t$. Due to \eqref{eqn:role1_state} and \eqref{eqn:role_less_than_state}, the weighted proportion of the role models is
\begin{equation}
    \pi^r_t = \frac{\sum_{k = 1}^{K} c_k r_t^k}{\sum_{k = 1}^{K} c_k}<s_t \implies \pi^r_t-s_t<0. 
\end{equation}
Hence, the pool update parameter $\theta_{t+1} = \theta_t + \eta_t (\pi^{r}_{t} - s_t) < \theta_t$. If the initial mean parameter $\theta_t$ is small enough, i.e., group $0$ is heavily under-represented in the initial pool, with a large probability the future state $s_{t+1} < \alpha$. Hence we can {\it approximately} see that the mean parameter of the proportion of minority group in the applicant pool decreases to zero. Therefore, we show that MFG policy can cause a negative feedback loop under role model reinforcement if the role model proportion is small enough, driving the under-represented group out of the system.
\end{proof}

\section{Additional experimental results}
\label{app:exps}
\subsection{Identical score distributions}
\label{app:iden_score_dists}
\begin{figure}[t]
    \centering
    \begin{subfigure}[t]{0.32\columnwidth}
         \centering
         \includegraphics[width=\textwidth]{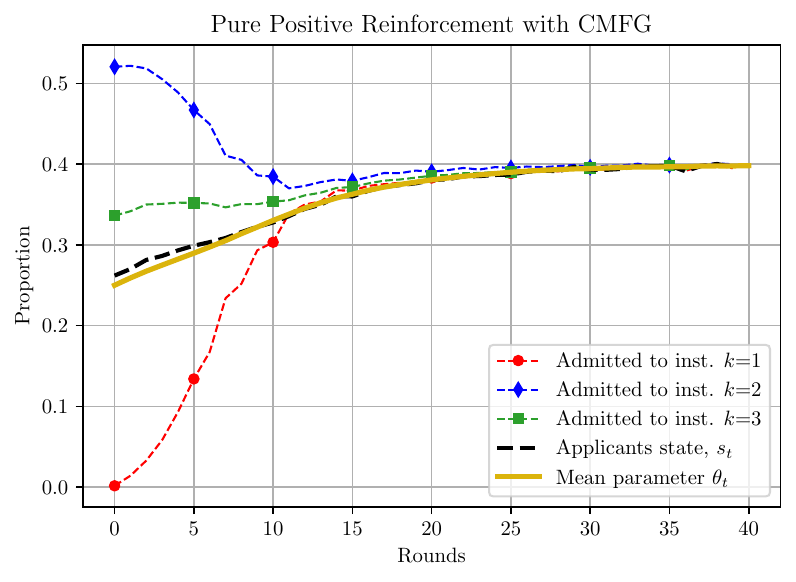}
         \caption{}
         \label{fig:pure_pos_cmfg}
     \end{subfigure}
     \hfill
     \begin{subfigure}[t]{0.32\columnwidth}
         \centering
         \includegraphics[width=\textwidth]{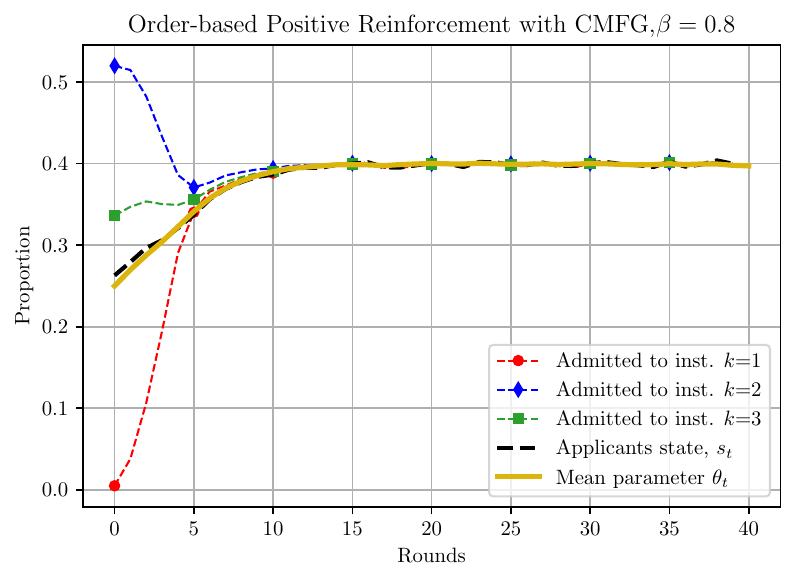}
         \caption{}
         \label{fig:order_cmfg}
     \end{subfigure}
     \hfill
     \begin{subfigure}[t]{0.32\columnwidth}
         \centering
         \includegraphics[width=\textwidth]{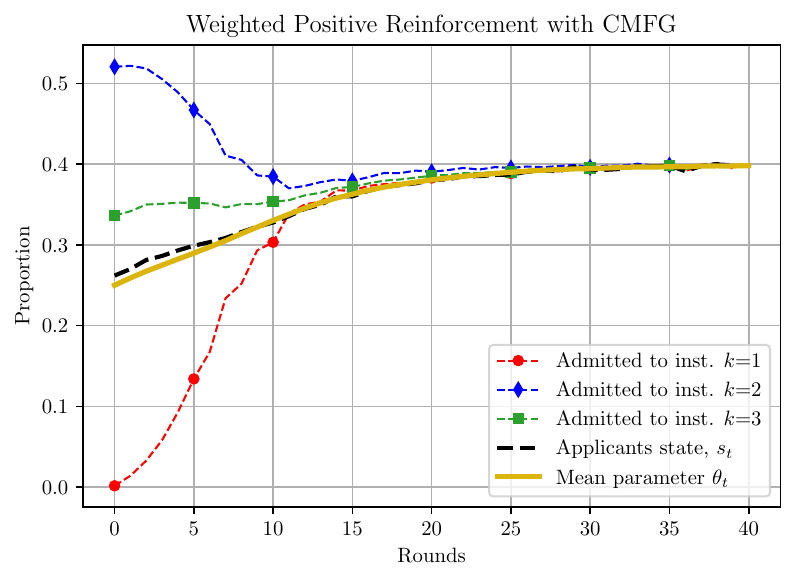}
         \caption{}
         \label{fig:weight_cmfg}
     \end{subfigure}
    \caption{ The centralized MFG policy achieves long-term fairness under pure positive \subref{fig:pure_pos_cmfg}, order-based \subref{fig:order_cmfg} and the weighted positive \subref{fig:weight_cmfg} reinforcement models.}
\end{figure}
We show the evolution of applicant and admission proportions of the CMFG policy under the pure positive reinforcement model, order-based and weighted positive reinforcement models in Figures~\ref{fig:pure_pos_cmfg},~\ref{fig:order_cmfg} and~\ref{fig:weight_cmfg}. The parameter settings are exactly the same as in Section~\ref{sec:exps_synt} for the case when the score distributions are identical. In these figures, we can observe that an institution's admission proportion is larger than the applicant proportion, and since the weighted average of the admission proportion governs the evolution, the pool gets positive feedback and approaches the fairness target.

\subsection{Distinct score distributions}
\label{sec:app_dist_dists}
Here, we consider the role model reinforcement with $r = 0.5$, under distinct score distributions, with all other parameters as described in Section~\ref{sec:exps_synt}. As seen in Figure~\ref{fig:8a}, the MFG policy again leads to a negative feedback loop with the distinct score distributions, resulting in the reduction of the minority group in the applicants pool. As seen in Figure~\ref{fig:8b}, the second and the third institutions have a very small fraction of role models as a result of the first institution admitting the top minority applicants. 
However, the CMFG policy could alleviate this, resulting in an equilibrium point lower than the initial mean parameter as shown in Figure~\ref{fig:8c}. In Figure~\ref{fig:8d}, it is evident that all institutions feature role models from the minority group, leading to an equilibrium point within the applicant pool.
\begin{figure}[t]
\centering
  \begin{subfigure}[t]{0.24\columnwidth}
         \centering
         \includegraphics[width=\textwidth]{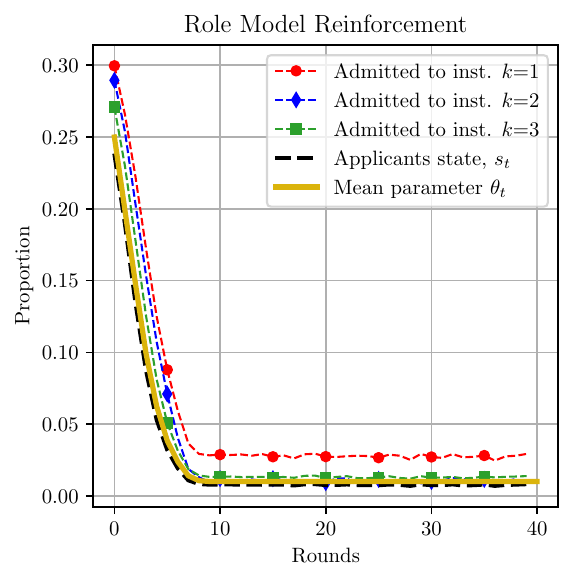}
         \caption{}
         \label{fig:8a}
     \end{subfigure}
  \begin{subfigure}[t]{0.24\columnwidth}
         \centering
         \includegraphics[width=\textwidth]{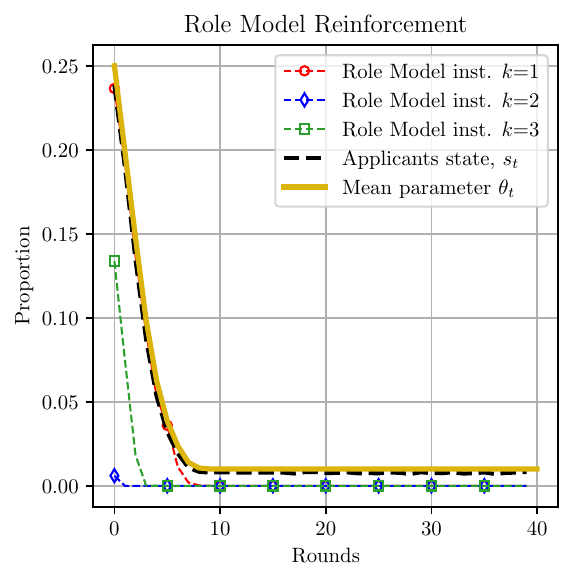}
         \caption{}
         \label{fig:8b}
     \end{subfigure}  \begin{subfigure}[t]{0.24\columnwidth}
         \centering
         \includegraphics[width=\textwidth]{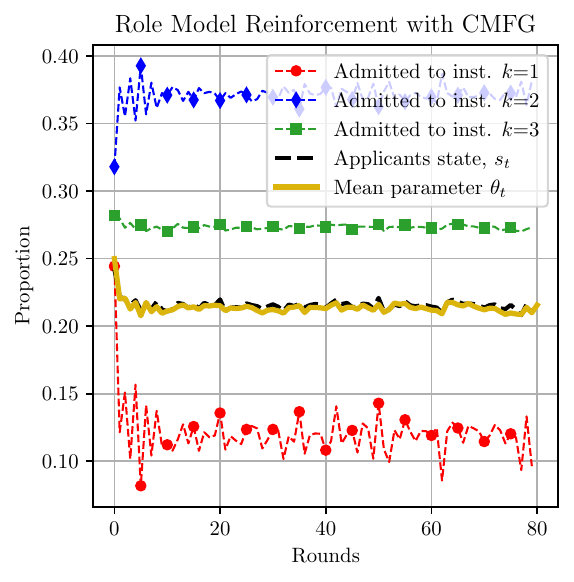}
         \caption{}
         \label{fig:8c}
     \end{subfigure}
     \begin{subfigure}[t]{0.24\columnwidth}
         \centering
         \includegraphics[width=\textwidth]{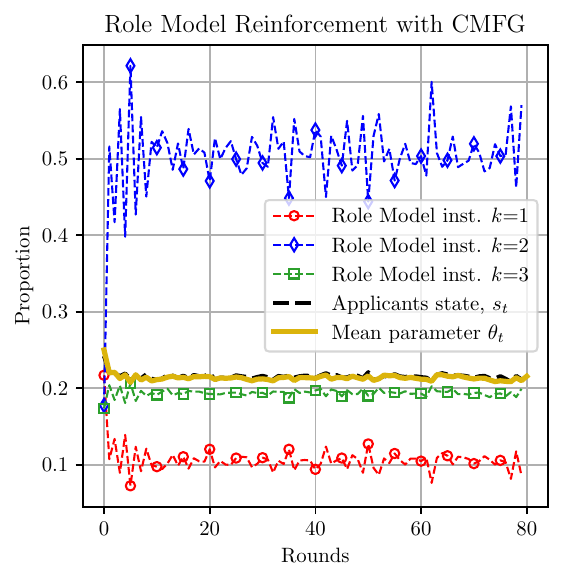}
         \caption{}
         \label{fig:8d}
     \end{subfigure}
    \caption{~\subref{fig:8a} MFG policy creates negative feedback under role-model reinforcement, under distinct score distributions.~\subref{fig:8b} The evolution of the proportions of role models for each institution, under MFG policy.~\subref{fig:8c} CMFG policy avoids negative feedback under distinct score distributions.~\subref{fig:8d} The evolution of the proportions of role models for each institution, under CMFG policy.
}
\label{fig:synt_4}
\end{figure}

\subsection{Semi-synthetic dataset}
\label{app:role_model_lsa_dataset}
\begin{figure}[t]
\centering
  \begin{subfigure}[t]{0.32\columnwidth}
         \centering
         \includegraphics[width=\textwidth]{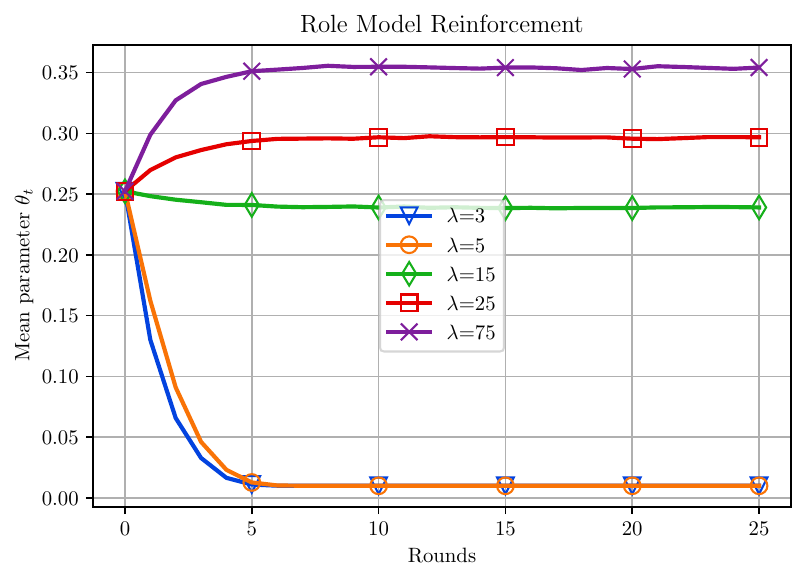}
         \caption{}
         \label{fig:9a}
     \end{subfigure}
     \hspace{1cm}
     \begin{subfigure}[t]{0.32\columnwidth}
         \centering
         \includegraphics[width=\textwidth]{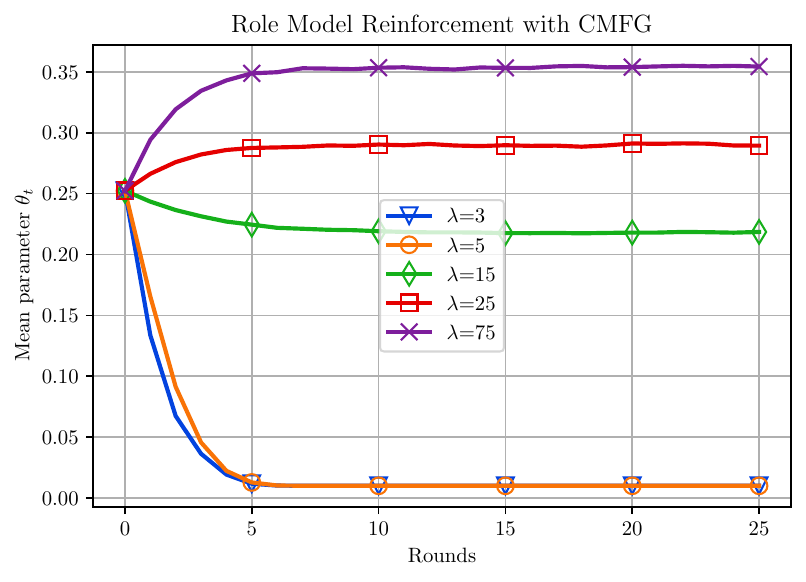}
         \caption{}
         \label{fig:9b}
     \end{subfigure}
    \caption{~\subref{fig:9a},~\subref{fig:9b} evolution of the mean parameter, $\theta_t$, and its equilibrium with varying values of $\lambda$, under both the MFG and CMFG policies, under role-model reinforcement.
}
    \label{fig:law_2}
\end{figure}
We consider the role-model reinforcement with $r=0.8$, where the top $80\%$ of the admitted applicants are considered role models for the resource pool. Figures~\ref{fig:9a} and~\ref{fig:9b} show the mean parameter $\theta_t$ at equilibrium for the MFG and CMFG policies, respectively, with varying values of $\lambda$. Both policies perform similarly on the law school dataset, possibly due to a larger difference between the score distributions of the groups.

\end{document}